
\documentclass[Afour,sageh,times]{sagej}



\DeclareMathOperator*{\argmin}{arg\,min}
\DeclareMathOperator*{\argmax}{arg\,max}

\newcommand{\R}{\ensuremath{\mathbb{R}}} 



\newcommand{\angled}[1]{\left\langle#1\right\rangle}
\newcommand{\sqBrackets}[1]{\left[#1\right]}



\newcommand{\task}[0]{\ensuremath{a}}


\newcommand{\allocation}[0]{\ensuremath{\textbf{A}}}  
\newcommand{\allocationSpace}[0]{\ensuremath{\boldsymbol{\mathcal{A}}}}

\newcommand{\taTaskNetwork}[0]{\ensuremath{T}}

\newcommand{\setInitialConfigurations}[0]{\ensuremath{I_c}}

\newcommand{\setInitialTerminalConfigurationsTaskNetwork}[0]{\ensuremath{L_\taTaskNetwork}}

\newcommand{\schedule}[1][none]{\possibleBold{#1}{\ensuremath{\sigma}}}

\newcommand{\scheduleSolution}[0]{\hat{\schedule}}


\newcommand{\worldState}[0]{\ensuremath{W}}


\newcommand{\allocationSolution}[0]{\ensuremath{\hat{\allocation}}}
\newcommand{\mpSolution}[0]{\ensuremath{\hat{\setMotionPlans}}}

\newcommand{\solutionNode}{\hat{N}}


\newcommand{\possibleBold}[2]{%
    \ensuremath{%
        \ifthenelse{\equal{#1}{bold}}%
        {\boldsymbol{#2}}%
        {#2}%
    }%
}

\newcommand{\performanceFunction}[1][none]{\possibleBold{#1}{\xi}}
\newcommand{\performanceFunctionSet}[1][none]{\possibleBold{#1}{\Xi}}
\newcommand{\maxMakespan}[1][none]{\possibleBold{#1}{\ensuremath{C}_{max}}}
\newcommand{\taskNetwork}[1][none]{\possibleBold{#1}{\mathcal{T}}}
\newcommand{\robotTraitMatrix}[1][none]{\possibleBold{#1}{Q}}

\newcommand{\numRobots}[1][none]{\possibleBold{#1}{N}}
\newcommand{\numTasks}[1][none]{\possibleBold{#1}{M}}

\newcommand{\problemDomainStatic}[1][none]{\possibleBold{#1}{\mathcal{D}}}

\newcommand{\solutionStatic}[1][none]{\possibleBold{#1}{S}}

\newcommand{\setMotionPlans}[1][none]{\possibleBold{#1}{\ensuremath{X}}}

\newcommand{\allocationOptimal}[0]{\ensuremath{\allocation^*}}

\newcommand{\algoname}[0]{E-ITAGS}

\newtheorem{theorem}{Theorem}

\usepackage{moreverb,url}
\usepackage{xcolor}
\usepackage{mathtools}
\usepackage{amssymb,amsmath,amsthm}
\usepackage[colorlinks,bookmarksopen,bookmarksnumbered,citecolor=red,urlcolor=red]{hyperref}
\usepackage{accents}
\usepackage{appendix}

\newcommand\BibTeX{{\rmfamily B\kern-.05em \textsc{i\kern-.025em b}\kern-.08em
T\kern-.1667em\lower.7ex\hbox{E}\kern-.125emX}}

\setcounter{secnumdepth}{3}
\begin{document}

\runninghead{Learning and Optimizing the Efficacy of MRTA}

\title{Learning and Optimizing the Efficacy of Spatio-Temporal Task Allocation under Temporal and Resource Constraints}

\author{Jiazhen Liu\affilnum{1}, Glen Neville\affilnum{1,2}, Jinwoo Park\affilnum{1}, Sonia Chernova\affilnum{1}, Harish Ravichandar\affilnum{1}}

\affiliation{\affilnum{1}Georgia Institute of Technology, USA; \affilnum{2}Zoox, USA}

\corrauth{Jiazhen Liu, Georgia Institute of Technology,
USA.}
\email{jliu3103@gatech.edu}

\begin{abstract}

Complex multi-robot missions often require heterogeneous teams to jointly optimize task allocation, scheduling, and path planning to improve team performance under strict constraints. 
We formalize these complexities into a new class of problems, dubbed \textit{Spatio-Temporal Efficacy-optimized Allocation for Multi-robot systems (STEAM)}. 
STEAM builds upon trait-based frameworks that model robots using their capabilities (e.g., payload and speed), but goes beyond the typical binary success-failure model by explicitly modeling the efficacy of allocations as \textit{trait-efficacy maps}. These maps encode how the aggregated capabilities assigned to a task determine performance. Further, STEAM accommodates spatio-temporal constraints, including a user-specified \textit{time budget} (i.e., maximum makespan). 
To solve STEAM problems, we contribute a novel algorithm named \textit{Efficacy-optimized Incremental Task Allocation Graph Search (E-ITAGS)} that \textit{simultaneously} optimizes task performance and respects time budgets by \textit{interleaving} task allocation, scheduling, and path planning.
Motivated by the fact that trait-efficacy maps are difficult, if not impossible, to specify, \algoname{} efficiently learns them using a \textit{realizability-aware active learning} module.
Our approach is realizability-aware since it explicitly accounts for the fact that not all combinations of traits are realizable by the robots available during learning. 
Further, we derive experimentally-validated bounds on \algoname{}' suboptimality with respect to efficacy.
Detailed numerical simulations and experiments using an emergency response domain demonstrate that \algoname{} generates allocations of higher efficacy compared to baselines, while respecting resource and spatio-temporal constraints. We also show that our active learning approach is sample efficient and establishes a principled tradeoff between data and computational efficiency. 

\end{abstract}

\keywords{multi-robot systems, spatio-temporal task allocation, active learning}

\maketitle

\section{Introduction}
Heterogeneous multi-robot systems (MRS) bring together complementary capabilities to tackle challenges in domains as diverse as agriculture \citep{Roldan2016}, defense \citep{McCook2007}, assembly \citep{Stroupe2005}, and warehouse automation \citep{Baras2019}. Achieving effective teaming in these complex domains often requires reasoning about task allocation (\textit{who})~\citep{Ravichandar2020}, scheduling (\textit{when})~\citep{Matos2021}, motion planning (\textit{how})~\citep{Baras2019}, and intersections of such spatio-temporal problems ~\citep{Neville2021,Messing2022}. 

In this work, we first formalize a novel class of heterogeneous multi-robot spatio-temporal coordination problems, dubbed \textit{Spatio-Temporal Efficacy-optimized Allocation for Multi-robot systems (STEAM)}. STEAM problems are closely related to ST-MR-TA problems defined by the well-established multi-robot task allocation (MRTA) taxonomies ~\citep{Gerkey2004,Nunes2017} (see Sec.~\ref{subsec:related_work:MRTA} for details).
To effectively tackle STEAM problems, we then develop \textit{Efficacy-optimized Incremental Task Allocation Graph Search (\algoname)}\footnote{Open-sourced at: https://github.com/GT-STAR-Lab/Q-ITAGS},
a heuristic-driven \textit{interleaved} search algorithm  (see Fig.~\ref{fig:block_diagram}), designed for \textit{heterogeneous} teams operating under \textit{time budgets}.  


We study and improve three of crucial aspects of STEAM-like problems: 
i) decomposibility of coalition-level tasks, ii) collective influence of robot capabilities on task performance, and iii) task specification. 
Below, we briefly introduce our approach through these three lenses, contextualized within existing approaches.
Please refer to Sec.~\ref{sec:related_work} for a detailed discussion of related works and the specific subsets of assumptions different approaches make. 

First, several methods assume that one can either decompose multi-robot tasks into individual robot subtasks~\citep{Giordani2013,Krizmancic2020} or \textit{enumerate} all possible coalitions~\citep{gombolay2016,Schillinger2018}. However, it is often difficult, if not impossible, to explicitly specify the role of each robot in tasks that involve complex collaboration, and enumerating every capable coalition might be intractable. In contrast, building upon our prior work in \textit{trait-based} task allocation~\citep{Ravichandar2020,Neville2021,Messing2022,srikanthan2022resource,neville-2023-d-itags}, we model robots in terms of capabilities and task requirements in terms of collective capabilities, leading to a \textit{flexible} and \textit{robot-agnostic} framework. As such, \algoname{} neither requires decomposition of tasks nor enumeration of all possible coalitions.

Second, existing approaches tend to assume a \textit{binary success-or-failure model} of task outcomes~\citep{fu2022robust,Messing2022,srikanthan2022resource}. Such a limited model does not apply to several real-world problems which require maximization of performance metrics, as opposed to satisfaction of arbitrary thresholds (e.g., distributed sensing, supply chain optimization, and disaster response). In contrast, we introduce \textit{trait-efficacy maps}, a novel and expressive \textit{continuous} model that maps collective capabilities (traits) to task performance (efficacy), helping encode and optimize the efficacy of allocations.

Third, existing methods often demand that users \textit{explicitly specify} how robot capabilities relate to successful task completion~\citep{Prorok2017,Ravichandar2020,Mayya2021,neville-2023-d-itags}. However, it is well known that humans, while adept at making complex decisions, often fail to explicitly articulate how they do so; in fact, we tend to make things up to justify our decisions~\citep{rieskamp2003people,rieskamp2006ssl}.  
In contrast, \algoname{} does not demand users to specify how collective capabilities relate to task performance. It employs a data-driven \textit{active learning} module to efficiently learn these influences and encodes them in trait-efficacy maps.


\algoname{} solves STEAM problems by \textit{interleaving} the search for task allocations and schedules. Such an interleaving paradigm has been shown to simultaneously optimize for task allocation, scheduling, and path planning without incurring a prohibitive computational overhead~\citep{Messing2022}. To guide \algoname{}' search, we introduce two new heuristics: i) \textit{Time Budget Overrun (TBO)}, which captures the suboptimality of a given schedule anchored against a user-specified threshold on makespan, and ii) \textit{Normalized Allocation Cost (NAC)}, which provides a normalized estimate of the current allocation's efficacy (as predicted by the trait-efficacy maps). \algoname{} employs a convex combination of these two heuristics and exposes a single hyperparameter that allows users to traverse the inherent trade-off between makespan and task performance.   

When learning trait-efficacy maps in the real-world, seemingly subtle factors can dramatically constrain learning. Consider the fact that the collective capabilities that can be allocated to a given task are directly determined and limited by those of the individual robots available during learning. For instance, with a team of three robots that have payloads of 5, 10, and 15 pounds, it would be impossible to form and evaluate a coalition with an exact collective payload of 8 or 18 pounds. In general, that set of all possible collective capabilities that can be realized by a given team is precisely the set of weighted combinations of the individual capabilities, with binary weights ($0$ or $1$). We refer to this important restriction as \textit{realizability} and explicitly account for it. 
We introduce multiple realizability-aware learning strategies that incorporate realizability to varying degrees of approximation, and show that they establish a principled tradeoff between data and computational efficiency. 

In addition to the above empirical benefits, we contribute theoretical insights into \algoname' operation. We derive a bound on \algoname' sub-optimality in terms of allocation efficacy under mild assumptions. 
Notably, our analysis illuminates an inherent trade-off between allocation efficacy and schedule makespan that hinges on a single hyperparameter. This insight and analysis can offer users intuitive guidance in choosing the hyperparameter.
We also demonstrate via experiments that our bound consistently holds in practice. 


In summary, our core contributions include
\begin{itemize}
\vspace{-0.1cm}
    \item A formalism of the spatio-temporal multi-robot task allocation problems that emphasizes optimizing \textit{allocation efficacy} while respecting a \textit{time budget};
    \item An \textit{interleaved} graph search algorithm guided by two \textit{novel heuristics} to solve such problems;
    \item A \textit{realizability-aware active learning} module that effectively learns unknown trait-efficacy maps, balancing data and computational efficiency;
    \item A \textit{suboptimality bound} for our approach with respect to allocation efficacy. 
\end{itemize}


We evaluated \algoname{} extensively in terms of its ability to i) optimize task allocation efficacy, ii) respect time budgets, and iii) learn unknown mappings between collective capabilities and task performance.
We conducted our evaluations both in numerical simulations and within RoboCup Rescue~\citep{kitano2001robocup,Kitano1999,robocupwebsite}, an autonomous search and rescue simulator. 
Our results conclusively demonstrate that \algoname{} outperforms existing methods in terms of allocation efficacy while generating schedules of a similar makespan. We also provide detailed analyses of \algoname{}' incremental search behaviors to find valid task allocations. 
Further, \algoname' realizability-aware active sampling approach achieves distinctive improvement in computational cost while maintaining data efficiency. 

This manuscript is an extended and revised version of a conference paper by~\cite{q-itags}. Significant additions and improvements include: i) realizability-aware active-learning method, 
which accounts for resource constraints of a given team,
ii) identification and treatment of idiosyncrasies that arise when active learning is utilized for constrained multi-robot coordination,
and 
iii) two sets of additional experiments validating our overall approach in an open-source robotics simulator involving search-and-rescue. For easy reference, we introduce the algorithmic details of i) and ii) in Sec.~\ref{subsec:active_learning_approach}, and the experiment results from iii) in Sec.~\ref{subsec:evaulate_active_learning} and Sec.~\ref{subsec:holistic_eval}. 

In what follows, we review related prior works in Sec.~\ref{sec:related_work}. Sec.~\ref{sec:problemdef}  provides a formal definition of STEAM problems, as well as active learning's role in it.  We introduce \algoname{} in Sec.~\ref{sec:mainapproach}, which includes a derivation of the suboptimality bound in Sec.~\ref{sec:theorems}. 
We introduce our realizability-aware active learning methodology in Sec.~\ref{subsec:active_learning_approach}. Sec.~\ref{sec:evaluations} discusses all of our experiments and results. Finally, we summarize our conclusions and discuss future work in Sec.~\ref{sec:conclusion}. 

\begin{figure*}[t]
\begin{center}
    \includegraphics[width=0.7\textwidth]{./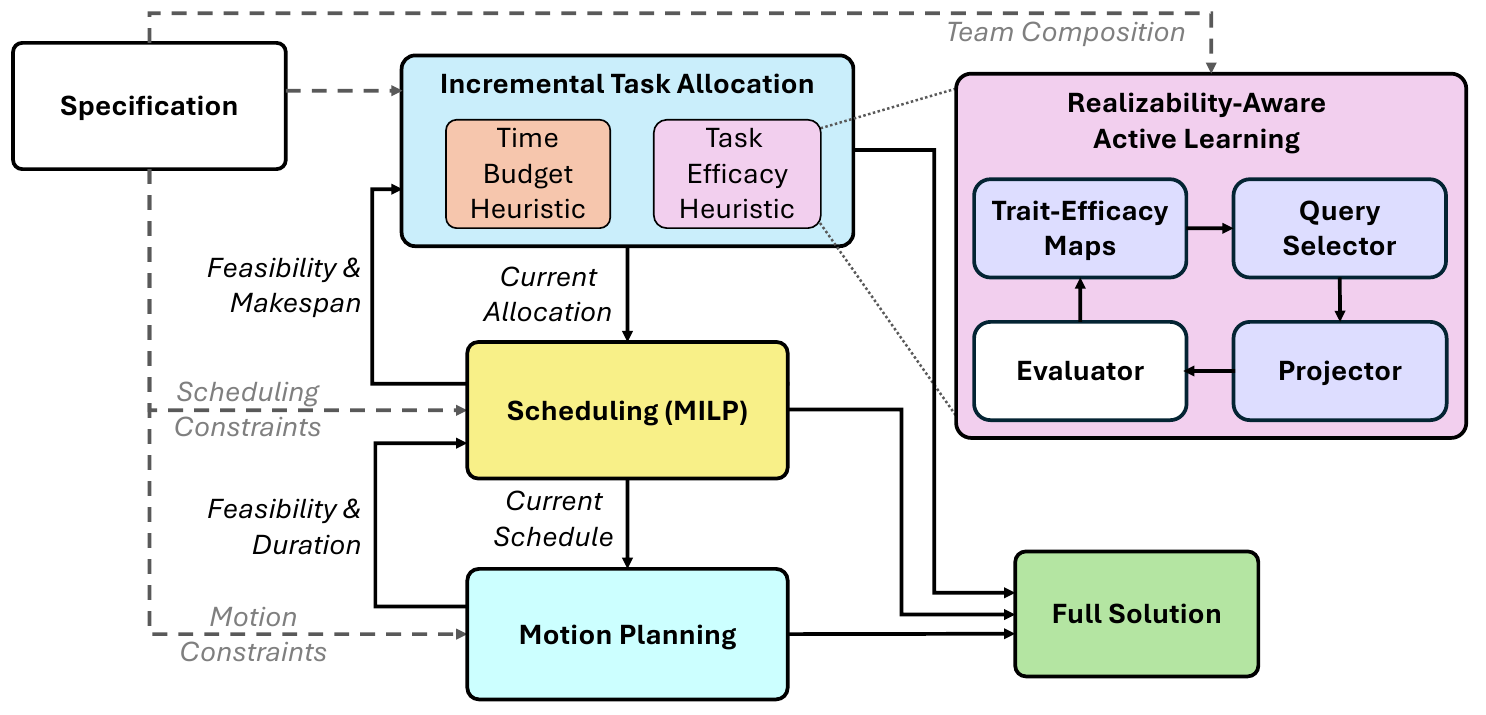}
\end{center}
\caption{
\small{
\algoname{} performs spatio-temporal task allocation for heterogeneous multi-robot teams by optimizing collective performance while respecting spatio-temporal and resource constraints. The realizability-aware active learning module explicitly models, actively learns, and optimizes trait-efficacy maps that approximate the effects of collective capabilities on task performance, with more details in Secs.~\ref{sec:mainapproach} and \ref{subsec:active_learning_approach}, and in Figs.~\ref{fig:sampling_relaxation}. and~\ref{fig:active_learning_pipeline}.
}}
\label{fig:block_diagram} 
\end{figure*}

\section{Related Works}
\label{sec:related_work}
In this section, we survey the related works, which mainly revolve around two pillars: multi-robot task allocation and active learning for multi-robot systems.

\subsection{Multi-Robot Task Allocation}\
\label{subsec:related_work:MRTA}

While the multi-robot task allocation problem has many variants~\citep{Gerkey2004,Nunes2017}, we limit our discussion to one variant closely related to our STEAM problem: single-task (ST) robots, multi-robot (MR) tasks, and time-extended (TA) allocation. ST-MR-TA problems require assigning coalitions to tasks while respecting temporal constraints (e.g., precedence and ordering), spatio-temporal constraints (e.g., travel times), and deadlines. In addition to these issues, our problem formulation also involves spatial constraints (e.g., task locations and obstacles). 

\textit{Auction-based methods} to solve ST-MR-TA problems involve auctioning tasks to robots through a bidding process based on a utility function that combines the robot's (or the coalition's) ability to perform the task with any temporal constraints \citep{Giordani2013,Krizmancic2020,quinton2023market}. Auctions are highly effective, but typically require each robot to know its individual utility towards a task. As a result, these methods require either 
i) multi-robot tasks to be decomposable into sub-tasks, each solvable by a single robot, 
or ii) robots within a coalition to have explicit knowledge of their individual contribution to all collaborative tasks.

\textit{End-to-end optimization-based methods} formulate the ST-MR-TA problem as a monolithic mixed-integer linear program (MILP) that incorporates complex inter-dependency constraints and optimizes the overall makespan or a utility function as the objective~\citep{Korsah2012,choudhury2022dynamic,Guerrero2017,chakraa2023optimization,Schillinger2018}.
However, these methods often require that all tasks to be decomposable into single-agent tasks, such as in~\cite{Schillinger2018}. In contrast to auction-based and optimization-based approaches, our approach does not require task decomposition. 

\textit{Learning-driven methods} for MRTA emerge as a practical alternative paradigm~\citep{10854527,wang2022heterogeneous,10610342,11044426}. Learning-based methods train an allocation policy that outputs assignment decisions and generally achieves higher scalability at inference time.~\cite{10610342} focuses on enabling the robots to voluntarily wait for teammates to become available. The policy is trained using imitation learning on a large-scale dataset with poor sample efficiency.~\cite{10854527,11044426} instead adopts reinforcement learning to circumvent the need for abundant demonstration data. However, both rely on defining the set of minimally necessary resources for tasks, which can be arbitrarily tricky to specify in realistic settings.

Our approach to task allocation is most closely related to \textit{trait-based methods}~\citep{Koes2006,Prorok2017,Ravichandar2020,Neville2020,Neville2021,srikanthan2022resource,fu2022learning,fu2022robust,CMTAB,neville-2023-d-itags,10341837}, which utilizes a flexible modeling framework that encodes task requirements in terms of traits (e.g., Task 1 requires traveling at 10m/s while transporting a 50-lb payload). Each task is not limited to a specific set or number of agents. Instead, the focus is on finding a coalition of agents that \textit{collectively} possess the required capabilities. As such, these methods allow for flexible and automated determination of coalition sizes and compositions.  However, most existing trait-based approaches are limited to single-task robot, multi-robot task, instantaneous allocation (ST-MR-IA) problems that do not require scheduling~\citep{Koes2006,Prorok2017,Ravichandar2020,Neville2020,srikanthan2022resource,CMTAB}, with a few notable exceptions, such as~\cite{Neville2021,neville-2023-d-itags,fu2022robust,fu2022learning}, that can handle ST-MR-TA problems.  


A key limitation of existing trait-based algorithms, including~\cite{Neville2021,neville-2023-d-itags,fu2022robust,10341837}, is that they assume that the user will explicitly quantify the minimum amount of collective traits necessary to perform each task. 
This presents two issues.
First, explicit quantification of minimum requirements can be intractable even for experts when dealing with complex tasks and multi-dimensional capabilities~\citep{rieskamp2003people}. 
Other sub-fields within robotics recognize this concern and have developed methods to prevent and handle reward or utility mis-specification~\citep{Menell2017,Mallozzi2018}.
Second, all methods, including those that learn requirements~\citep{fu2022learning} assume that any and all additional traits allocated beyond the minimum provide no benefit to the team and that not satisfying the specified threshold will lead to complete failure. This effectively ignores the natural variation in task performance as a function of allocated capabilities.
In stark contrast, \algoname{} does not require the user to specify trait requirements and utilizes a more expressive model to capture the relationship between allocated capabilities and task performance.

\subsection{Active Learning for Multi-Robot Systems}
We synthesize an overview of prior works applying active learning in the context of multi-robot or multi-agent systems, categorizing them based on whether the coordination among the robots happens at the motion-level or the task-level. 

Considering coordination at the motion level, active learning is a common paradigm for solving multi-robot active perception and informative path planning~\citep{9632368,kailas1,8260881,tzes2022graph,stadler2023approximating,10107979,9144385}. These problems require sequential decision-making to gather critical information about an unknown environmental process with the robots' onboard sensors. With an increasing amount of information collected, robots can reduce the uncertainty in their knowledge of the environment. The decision on when and where to sample constitutes the ``activeness'' of the learning procedure. 

Applying active learning to multi-robot problems where coordination is reasoned at the task level receives less attention. 
Among the scarce efforts, multi-agent task assignment is formulated as a restless bandit problem in~\cite{le2006multi} with switching costs and rewards.~\cite{CMTAB} examines multi-robot \textit{instantaneous} task allocation with unknown reward functions, which is closely related to our problem setting. It proposes an algorithm that builds upon GP-UCB~\citep{GP-UCB} and utilizes the adaptive discretization mechanism called the zooming algorithm~\citep{slivkins2019introduction}. Though demonstrated to be effective,~\cite{CMTAB} restricts itself to tackling the case where all tasks are assumed to occur concurrently.

\section{Problem Formulation}
\label{sec:problemdef}

We begin with preliminaries from our prior work~\citep{Neville2021,Messing2022,neville-2023-d-itags} for context, and then formalize a new class of spatio-temporal task allocation problems.

Consider a team of \numRobots\ heterogeneous robots, with the $i$th robot's capabilities described by a collection of traits $q^{(i)} = \sqBrackets{q_1^{(i)},\ q_2^{(i)},\ \cdots, q_U^{(i)}}^\intercal \in \mathbb{R}_{\geq 0}^U$, where $q_u^{(i)}$ corresponds to the $u^{th}$ trait for the $i^{th}$ robot. We assign $q_u^{(i)} = 0$ when the $i^{th}$ robot does not possess the $u^{th}$ trait (e.g. firetrucks have a water capacity, but other robots may not).
As such, the capabilities of the team can be defined by a \textbf{team trait matrix} 
$
  \robotTraitMatrix[bold]  = \sqBrackets{q^{(1)},\ \cdots,\ q^{(N)}}^{\intercal} \in \R_{\geq 0}^{N \times U}
$ 
whose $iu$-th element (i.e. $i$th row and $u$th column) denotes the $u^{th}$ trait of the $i^{th}$ robot.

We model the set of $M$ tasks to be completed as a \textbf{Task Network} \taskNetwork[bold]: a directed graph $G=(\mathcal{V}, \mathcal{E})$, with vertices $\mathcal{V}$ representing a set of tasks $\{\task_m\}_{m=1}^{M}$, and edges $\mathcal{E}$ representing constraints between tasks. For a pair of tasks $\task_i$ and $\task_j$ with $a_i \neq a_j$, we consider two kinds of constraints: (i) \textbf{precedence constraint} $\task_i \prec \task_j$ requires that Task $\task_i$ should be completed before the Task $\task_j$ can begin (e.g., a fire must be put out before repairs can begin) \citep{Weld1994}; (ii) \textbf{mutex constraint} ensures that $\task_i$ and $ \task_j$ are not concurrent (e.g. a manipulator cannot pick up two objects simultaneously) \citep{Bhargava2019}. Further, we define \maxMakespan~as the 
\textbf{total time budget} which encodes the longest makespan acceptable to the user.  

Let $\allocation \in \allocationSpace \subseteq \{0,1\}^{M \times N}$ denote the binary \textbf{allocation matrix}, where $\allocation_{mn} = 1$ if and only if the $n$-th robot is allocated to the $m$-th task and $\allocation_{mn} = 0$ otherwise. Robots can complete tasks individually or collaborate as a coalition, and any robot can be allocated to more than one task as long as the tasks are scheduled to be carried out during non-overlapping intervals. We use $\allocation_m$ to denote the $m$-th row of $\allocation$ and it specifies the coalition of robots assigned to the $m$-th task. We further define 
$\mathbf{Y} = \allocation \robotTraitMatrix[bold] \in \R^{M\times U}_{\geq 0}$ to be the aggregated traits for all tasks according to allocation plan $\allocation $, with $y_m$ denoting its $m$-th row containing the collective traits available for the $m$-th task. Note that we use $\mathcal{Y} = \R^{M\times U}_{\geq 0}$ for brevity in upcoming sections. 

\subsection{Efficacy-Optimized Spatio-Temporal Task Allocation}
In this work, we extend the problem formulation from our prior work to account for the efficacy of task allocation. We formulate a new class of problems, \textit{Spatio-Temporal Efficacy-optimized Allocation
for Multi-robot systems (STEAM)}, which involves optimizing the efficacy of robots' assignments to tasks while ensuring that the associated makespan is less than a user-specified threshold. We then extend STEAM to include \textit{active learning} of unknown trait-efficacy maps, which map the collective capabilities of a given coalition to task performance.

Let $\performanceFunction_m: \R^{U}_{\geq 0} \rightarrow [0,1]$ be the normalized \textbf{trait-efficacy map} that computes a non-negative efficacy score (with numbers closer to 1 indicating more effective coalitions) associated with the $m^{th}$ task given the collective traits allocated to it. In essence, $\performanceFunction_m$ quantifies how a given coalition's capabilities translate to performance on the $m^{th}$ task. We define the \textbf{total allocation efficacy} to summarize the performance of all tasks:
\begin{equation}
    \performanceFunctionSet(\allocation) = \sum_{m=1}^{M} \performanceFunction_m(\robotTraitMatrix[bold]^T\allocation_m^T) = \sum_{m=1}^{M}  \performanceFunction_m(y_m) 
    \label{eq:total_allocation_efficacy}
\end{equation}
Note that the maps $\performanceFunction_m, \forall m$ are between \textit{traits} (not robots) and performance, yielding more generalizable relationships. We also discuss the active learning of trait-efficacy maps in Sec.~\ref{subsec:active_learning_staq} when they are unknown. 

We define the \textbf{problem domain} using the tuple $\problemDomainStatic[bold] = \big<\taskNetwork[bold],\ \robotTraitMatrix[bold],\ \performanceFunctionSet,\ \setInitialConfigurations,\ \setInitialTerminalConfigurationsTaskNetwork,\ \worldState{},\ \maxMakespan \big>$,
where
\taskNetwork[bold]\ is the task network, \robotTraitMatrix[bold]\ is the team trait matrix, \performanceFunctionSet\ is the summarized performance function, $\setInitialConfigurations$ and $\setInitialTerminalConfigurationsTaskNetwork$ are respectively the sets of all initial and terminal configurations associated with tasks, \worldState{} is a description of the world (e.g., a map), and \maxMakespan~is the total time budget.

Finally, we define a \textbf{solution} to the problem defined by $\problemDomainStatic[bold]$ using the tuple $\boldsymbol{\solutionStatic} = \angled{\allocationSolution,\ \scheduleSolution,\ \mpSolution}$, where $\allocationSolution$\ is an allocation, $\scheduleSolution$\ is a schedule that respects all temporal constraints, and $\mpSolution$\ is a finite set of collision-free motion plans. The goal of STEAM is to find a solution that maximizes the total allocation efficacy $\performanceFunctionSet(\allocation)$ while respecting the total time budget $\maxMakespan$.

\subsection{Realizability-Aware Active Learning}
\label{subsec:active_learning_staq}
As above, $\performanceFunction_m(\cdot)$ is a function that maps the collective traits assigned to the $m$th task to a measure of task performance. While accurate knowledge of these maps is needed to make effective trade-offs when allocating limited resources (i.e., robots) to competing objectives (i.e., tasks), they are often difficult to explicitly specify. As such, we do not assume knowledge of $\performanceFunction_m(\cdot),\ \forall m$. Instead, we assume access to a simulator from which to learn. We are specifically interested in limiting the number of queries made to facilitate learning since such queries can be expensive or time-consuming. As such, we formulate the problem of learning $\performanceFunction_m(\cdot),\ \forall m$ as an active learning problem in which one must effectively sample promising coalitions to efficiently learn the trait-efficacy maps. 

To learn $\{\performanceFunction_m(\cdot) \}_{m=1}^{M}$ actively, we utilize a query selector to sample the most promising data point, as dictated by the acquisition function $V: \mathcal{Y} \rightarrow \mathbb{R}$. The acquisition function quantifies the utility of samples, and common definitions include maximizing the upper confidence bound~\citep{GP-UCB} or maximizing the variance.  
A noticeable roadblock for effective sampling is that a point $\mathbf{Y} \in \mathcal{Y}$, believed to have a high utility by $V$, is not necessarily \textit{realizable} for a given set of robots with team trait matrix $\robotTraitMatrix[bold]$. This stems from the fact that each robot is a non-divisible unit and can either be assigned or not assigned to a task. In fact, the truly directly realizable samples are the ones satisfying $\mathbf{Y}=\mathbf{\allocation \robotTraitMatrix[bold]}$, for some $\allocation$ in the allocation space $\allocationSpace \subseteq \{0,1\}^{M \times N}$. Formally, we define $\mathcal{Y}_{\robotTraitMatrix[bold]} = \{ \mathbf{Y} \in \mathcal{Y} | \mathbf{Y} = \allocation \robotTraitMatrix[bold], \forall \allocation \in \allocationSpace\}$ as the set of all directly realizable samples. We take into account such disconnect between the common design practice of acquisition functions and the samples' realizability. To bridge the disjointedness, we aim to use samples' realizability to inform the acquisition function. 

\section{Efficacy Optimization under Time Budgets}
\label{sec:mainapproach}

To solve the STEAM problems as defined in Sec. \ref{sec:problemdef}, 
we introduce \textbf{Efficacy-optimized Iterative Task Allocation Graph Search (\algoname)} algorithm. In this section, we focus on solving STEAM problems online and assume that the trait-efficacy maps $\{\performanceFunction_m(\cdot) \}_{m=1}^{M}$ have already been learned. We discuss the learning of trait-efficacy maps in Sec.~\ref{subsec:active_learning_approach}.

We begin with a brief overview of \algoname{} before supplying details. \algoname{} utilizes an interleaved architecture to simultaneously solve task allocation, scheduling, and motion planning (see Fig.~\ref{fig:block_diagram}). This interleaved approach is inherited from our prior works~\cite{Neville2021,Messing2022,neville-2023-d-itags}, in which we provide extensive evidence for its benefits over sequential counterparts that are commonly found in the literature. For instance, interleaved architectures are considerably more computationally efficient since they avoid backtracking~\citep{Messing2022}.
However, \algoname{} makes crucial improvements over our prior work: i) optimizing for allocation efficacy instead of makespan, ii) utilizing a more expressive model of task performance, and iii) respecting a time budget (see Sec.~\ref{sec:related_work}).

\algoname{} performs an incremental graph-based search that is guided by a novel heuristic. Our heuristic balances the search between optimizing task allocation efficacy and meeting the time budget requirement. We formulate and solve a mixed-integer linear program (MILP) to address the scheduling and motion planning problems as part of the incremental search. To get around the challenge of explicitly specifying task requirements, \algoname{} employs an \textit{active learning} module to learn the relationship between the collective traits of the coalition and task performance. 
When a very tight time budget $C_{max}$ is imposed, it might be possible that no feasible solution exists no matter how the robots are allocated. For such cases, \algoname{} can signal infeasibility after it exhausts all combinations through the search without finding a solution. 

\subsection{Task Allocation}\label{subsec:allocation} 
The task allocation layer performs a greedy best-first search through an incremental task allocation graph $\mathcal{N}$. In this graph, each node represents an allocation of robots to tasks. Nodes are connected to other nodes that differ only by the assignment of a single robot. 
Note that the root node represents all robots allocated to each task.  Indeed, such an allocation most often will result in the best performance (since all available robots contribute to each task), but will result in the longest possible schedule (sequential execution of tasks). \algoname' search starts from this initial node, and incrementally removes assignments to find an allocation that can meet the time budget without significantly sacrificing task performance.

To guide the search, we developed two heuristics: \textit{Normalized Allocation Cost (NAC)}, which guides the search based on the efficacy of the allocation, and \textit{Time Budget Overrun (TBO)}, which guides the search based on the makespan of the schedule associated with the allocation. 

\noindent \textbf{\textit{Normalized Allocation Cost (NAC)}} is a normalized measure of how a given allocation improves task performance:
\begin{equation}
    \label{equ:NAC}
    f_{NAC}(\overline{\allocation})  = \frac{\performanceFunctionSet(\allocation_\text{root}) - \performanceFunctionSet(\overline{\allocation})}{\performanceFunctionSet(\allocation_\text{root}) - \performanceFunctionSet(\allocation_\text{null})}    
\end{equation}
where $\overline{\allocation}$ is the allocation being evaluated, $\performanceFunctionSet(\overline{\allocation})$ is the total allocation efficacy as defined in Eq.~(\ref{eq:total_allocation_efficacy}), $\allocation_\text{root}$ is the allocation at the root node, and $\allocation_\text{null}$ represents no robots being assigned to any task. Since $\performanceFunctionSet(\allocation_\text{root})$ and $\performanceFunctionSet(\allocation_\text{null})$ respectively define the upper and lower bounds of allocation efficacy, NAC is bounded within $[0,1]$. Note that the efficacy functions $\{\performanceFunction_m(\cdot)\}_{m=1}^{M}$, needed to compute $\performanceFunctionSet(\cdot)$, would be learned as described in Sec.~\ref{subsec:active_learning_approach}. We term this heuristic the ``cost'' since Eq.~\ref{equ:NAC} essentially calculates a normalized decrease in task performance, i.e., a cost to pay, when we subtract assignments from the root node.  

NAC does not consider the scheduling layer and promotes a broader search of the allocation graph. This is due to the fact that shallower nodes have more robots allocated and as a result tend to result in higher allocation efficacy. As such, NAC favors allocations that maximize allocation efficacy and task performance at the expense of a potentially longer makespan.

\vspace{3pt}
\noindent \textbf{\textit{Time Budget Overrun (TBO)}} measures how much the schedule associated with a given allocation violates the time budget, and is calculated as 
\begin{equation}
    \label{equ:pos}
    f_{TBO}(C_{\overline{\schedule[bold]}}) = \max \left( \frac{C_{\overline{\schedule[bold]}} -  C_{max}}{\vert C_{\schedule[bold]_{worst}} -  C_{max}\vert},\ 0 \right)    
\end{equation}
where $C_{\schedule[bold]}$ denotes the makespan of the schedule $\schedule[bold]$, $\overline{\schedule[bold]}$ is the schedule associated with the $\overline{\allocation}$ being evaluated, $C_{max}$ is the user-specified time budget for makespan, and $\schedule[bold]_{worst}$ is the longest schedule which allocates all robots to each task.

Since TBO only considers the schedule and not the allocation, it tends to favor nodes deeper in the graph that have fewer robots and constraints and thus a lower makespan. 
As such, TBO favors allocations that satisfy the time budget at the expense of a deeper search and more node expansions.

\vspace{3pt}
\noindent \textbf{\textit{Time-Extended Task Allocation Metric (TETAM)}} is a convex combination of NAC and TBO, balancing allocation efficacy and time budget:
\begin{equation}
    \label{equ:TETAM}
    f_{TETAM}(\overline{\allocation}, C_{\overline{\schedule[bold]}})= (1-\alpha)f_{NAC}(\overline{\allocation})  + \alpha f_{TBO}(C_{\overline{\schedule[bold]}})
\end{equation}
where $\alpha \in [0,1]$ is a user-specified parameter that controls each heuristic's relative influence. 
Thus, TETAM considers both allocation efficacy and the schedule simultaneously when searching for a solution. See Sec.~\ref{sec:theorems} for a theoretical analysis of the trade-offs between allocation efficacy and makespan.

\subsection{Scheduling and Motion Planning} \label{subsec:scheduling_and_MP} 
\algoname{}' scheduling layer checks the feasibility of scheduling a given allocation and helps compute its TBO. To this end, we formulate a mixed-integer linear program (MILP) that considers three types of temporal constraints: precedence, mutex, and travel time. Precedence constraints $\mathcal{P}$ ensure that one task happens before another (e.g., the fire must be doused before repairs begin). Mutex constraints $\mathcal{M}$ ensure that two tasks do not happen simultaneously (e.g., a robot cannot pick up two objects simultaneously). Travel time constraints ensure that robots have sufficient time to travel between task sites (e.g., traveling to the location of fire before dousing). The MILP takes the following form:

\begin{equation*}
\begin{aligned}
    \min_{\{s_i\}, \{ p_{ij}\}}  &\ C \\
    \mathrm{s.t.}    &\ C \geq s_i + d_i,\ \forall i=1,..,\numTasks \\
                     &\ s_j \geq s_i + d_i + x_{ij},\ \forall (i,j) \in \mathcal{P} \\
                     &\ s_i \geq x_i,\ \forall i=1,..,\numTasks \\
                     &\ s_i \geq s_j + d_j + x_{ji} - \beta p_{ij},\ \forall (i,j) \in \mathcal{M}^R \\
                     &\ s_j \geq s_i + d_i + x_{ij} - \beta(1-p_{ij}),\ \forall (i,j) \in \mathcal{M}^R
\end{aligned}    
\end{equation*}
where $C$ is the makespan, $s_i$ and $d_i$ are the (relative) starting time and duration of Task $\task_i$, $x_{ij}$ is the time required to translate from the site of $\task_i$ to the site of $\task_j$, $x_i$ is the initial time required for the allocated coalition to travel to the site of Task $a_i$, $p_{ij} \in \{ 0, 1\}$, and $p_{ij} = 1$ if and only if $\task_i$ precedes $\task_j$, $\beta \in \mathbb{R}_+$ is a large scalar, and $\mathcal{P}$ and $\mathcal{M}$ are sets of integer pairs containing the lists of precedence and mutex constraints, with $\mathcal{M}^R = \mathcal{M} - \mathcal{P} \cap \mathcal{M}$ denoting mutex constraints with precedence constraints removed.

\algoname{} initializes the schedule by estimating travel times using Euclidean distances. During the search, the scheduling layer iteratively queries the motion planner to better estimate travel times. \algoname{} iterates until all motion plans required by the schedule are instantiated and memoized.

\subsection{Suboptimality Bounds of Allocation}
\label{sec:theorems}

To better understand \algoname' performance, we analyze the effect of $\alpha$ -- the user-specified parameter that determines the relative importance of our two heuristics -- on the optimality of the obtained solution defined with respect to the efficacy of task allocation. We demonstrate that 
the choice of $\alpha$ determines a suboptimality bound, where $\alpha$ values between $\alpha=0$ and $\alpha=0.5$ promise increasingly tighter bounds on suboptimality under mild assumptions. This analysis and the fact that $\alpha$ values closer to 1 provide solutions that prioritize makespan can offer an intuitive understanding of the relationship between allocation efficacy and makespan.

Below, we derive strict suboptimality bounds for solutions generated by \algoname{}, in terms of total allocation efficacy. 


\begin{theorem}\label{thm:makespan}
    For any given problem domain \problemDomainStatic[bold], let $\allocationOptimal$ be the optimal allocation w.r.t. total allocation efficacy, and $\allocationSolution$ be the allocation of the solution generated by \algoname{} for the same problem. Assuming that adding a robot would never reduce a coalition's performance, the suboptimality of \algoname{}' solution with respect to allocation efficacy is given as follows when $\alpha < 0.5$ in Eq. (\ref{equ:TETAM}):
    \begin{equation}
    \label{equ:boundsPre}
         \performanceFunctionSet(\allocationOptimal)- \performanceFunctionSet(\allocationSolution) \leq \frac{\alpha}{1-\alpha} (\performanceFunctionSet(\allocation_\text{root})- \performanceFunctionSet(\allocation_\text{null})),
    \end{equation}
    where $\performanceFunctionSet(\allocation_\text{root})$ and $\performanceFunctionSet(\allocation_\text{null})$ denote the total allocation efficacy respectively when all robots are assigned to all tasks, and when no robot is assigned to any task. Additionally, when $\alpha = 0$, \algoname{} gives an optimal solution and the suboptimality gap vanishes, i.e., \performanceFunctionSet(\allocationOptimal) - \performanceFunctionSet(\allocationSolution) $= 0$. 
    \label{thm1}
\end{theorem}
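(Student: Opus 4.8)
The plan is to reduce the efficacy gap to a single scalar statement about the NAC value of the returned allocation, and then bound that value using feasibility of $\allocationSolution$ together with the TETAM value of the root node. First I would invoke the monotonicity assumption (adding a robot never lowers a coalition's efficacy): it makes each $\performanceFunction_m$ non-decreasing in its assigned coalition, so the total efficacy $\performanceFunctionSet$ is maximized by the all-robots-to-all-tasks allocation. Hence $\allocationOptimal$ agrees in efficacy with the root, i.e. $\performanceFunctionSet(\allocationOptimal) = \performanceFunctionSet(\allocation_\text{root})$. Substituting this into the definition of NAC (Eq.~\ref{equ:NAC}) rewrites the target gap \emph{exactly} as $\performanceFunctionSet(\allocationOptimal) - \performanceFunctionSet(\allocationSolution) = f_{NAC}(\allocationSolution)\,(\performanceFunctionSet(\allocation_\text{root}) - \performanceFunctionSet(\allocation_\text{null}))$, so (modulo the degenerate case $\performanceFunctionSet(\allocation_\text{root}) = \performanceFunctionSet(\allocation_\text{null})$, which makes NAC ill-defined and the bound trivial) the theorem is equivalent to the scalar inequality $f_{NAC}(\allocationSolution) \le \frac{\alpha}{1-\alpha}$.

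Next I would exploit that the solution $\algoname$ returns is feasible, so its schedule satisfies the budget and therefore $f_{TBO} = 0$ by Eq.~\ref{equ:pos}; plugging into Eq.~\ref{equ:TETAM} gives $f_{TETAM}(\allocationSolution) = (1-\alpha)\,f_{NAC}(\allocationSolution)$. In parallel, the root has $f_{NAC}(\allocation_\text{root}) = 0$ and, since its schedule is precisely the worst-case schedule, $f_{TBO}(\allocation_\text{root}) = 1$ in the interesting case where the root overruns the budget (otherwise the root is itself feasible and efficacy-optimal, and the bound holds trivially). Thus $f_{TETAM}(\allocation_\text{root}) = \alpha$. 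It then remains only to show the single inequality $f_{TETAM}(\allocationSolution) \le f_{TETAM}(\allocation_\text{root}) = \alpha$: combining it with $f_{TETAM}(\allocationSolution) = (1-\alpha)\,f_{NAC}(\allocationSolution)$ and dividing by $1-\alpha > 0$ yields exactly the required $f_{NAC}(\allocationSolution) \le \frac{\alpha}{1-\alpha}$.

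The hard part will be establishing this last inequality, which is a statement about the greedy best-first search rather than pure algebra. I would argue it via a frontier invariant: since $\algoname$ pops nodes in increasing order of $f_{TETAM}$ starting from $\allocation_\text{root}$ (value $\alpha$), I would show the open list always contains a node with $f_{TETAM} \le \alpha$, so that the first feasible node popped (namely $\allocationSolution$, which is the open-list minimum at that instant) inherits $f_{TETAM}(\allocationSolution) \le \alpha$. Maintaining this invariant across an expansion is the delicate step: it requires that removing a robot from an infeasible node with $f_{TETAM} \le \alpha$ produces a child still at or below $\alpha$. This leans on the two heuristics moving in opposite, controlled directions as robots are dropped -- $f_{NAC}$ non-decreasing (from the efficacy-monotonicity assumption) while $f_{TBO}$ is non-increasing (removing robots relaxes mutex and travel constraints and cannot lengthen the makespan) -- so that a descent path from the root to a feasible node exists along which $f_{TETAM}$ never rises above $\alpha$. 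I expect justifying the makespan-monotonicity of $f_{TBO}$ under robot removal, and hence the existence of such a guarded descent path, to be the technically subtle point that the formal proof must pin down.

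Finally, I would obtain the $\alpha = 0$ claim as a limiting case of the same bound: at $\alpha = 0$ the right-hand side $\frac{\alpha}{1-\alpha}\,(\performanceFunctionSet(\allocation_\text{root}) - \performanceFunctionSet(\allocation_\text{null}))$ vanishes, forcing $\performanceFunctionSet(\allocationOptimal) - \performanceFunctionSet(\allocationSolution) \le 0$; since $\allocationOptimal$ is efficacy-optimal the gap is also non-negative, hence exactly zero. This matches the search behavior, since with $\alpha = 0$ the heuristic collapses to $f_{NAC}$ alone and $\algoname$ returns a feasible allocation of maximal efficacy. Throughout, I would dispatch the edge cases -- a feasible root and the collapsed-range case $\performanceFunctionSet(\allocation_\text{root}) = \performanceFunctionSet(\allocation_\text{null})$ -- separately, as the bound is immediate there.
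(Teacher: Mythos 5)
Your proposal has a genuine gap, and it begins at the very first step. You read $\allocationOptimal$ as the \emph{unconstrained} efficacy optimum, so that monotonicity gives $\performanceFunctionSet(\allocationOptimal) = \performanceFunctionSet(\allocation_\text{root})$ and the theorem collapses to the scalar claim $f_{NAC}(\allocationSolution) \le \frac{\alpha}{1-\alpha}$. That claim is false in general, so no argument can establish it: \algoname{} only returns allocations whose schedules respect $C_{max}$, and a tight budget can force \emph{every} feasible allocation to have, say, $f_{NAC} \ge 1/2$, while your reduced claim demands $f_{NAC}(\allocationSolution) \le \alpha/(1-\alpha) \approx 0.11$ at $\alpha = 0.1$. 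The same misreading breaks your $\alpha = 0$ argument: a feasible solution generally cannot match the root's efficacy, so the gap against the unconstrained optimum does not vanish there. The theorem is meaningful (and the paper's proof works) only when $\allocationOptimal$ is the optimal allocation \emph{among those admitting a schedule within the budget}. That feasibility is what the paper implicitly exploits: every node expanded before the solution was rejected because it is infeasible, so the feasible optimum must lie in the open set $\mathcal{O}$ or among its unopened descendants $\mathcal{U}$; monotonicity of $f_{NAC}$ along graph edges (this is where the no-negative-contribution assumption is actually used, not to identify the optimum with the root) then yields $\performanceFunctionSet(\allocationOptimal) \le \max_{N \in \mathcal{O}} \performanceFunctionSet(\allocation_N)$.

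The second gap is your frontier invariant, which is also false. You want the open list to always contain a node with $f_{TETAM} \le \alpha$, maintained because $f_{NAC}$ rises while $f_{TBO}$ falls as robots are removed. But expanding the (infeasible) root, whose value is exactly $\alpha$, can produce only children with $f_{TETAM} > \alpha$: removing a single robot typically leaves the makespan still over budget with $f_{TBO}$ at or near $1$ while strictly increasing $f_{NAC}$, so the convex combination rises above $\alpha$, and nothing forces it back below $\alpha$ before feasibility is reached. The paper needs no such invariant and never asserts $f_{TETAM}(\allocationSolution) \le \alpha$. Instead, it uses greediness only once: when the solution is popped, $f_{TETAM}(\solutionNode) = (1-\alpha) f_{NAC}(\solutionNode)$ is no larger than the TETAM of the particular open node $N' = \argmax_{N\in\mathcal{O}} \performanceFunctionSet(\allocation_N)$; it then bounds $f_{TBO}(N') \le 1$ and substitutes $\performanceFunctionSet(\allocation_{N'}) \ge \performanceFunctionSet(\allocationOptimal)$ from the inequality above, after which pure algebra gives Eq.~(\ref{equ:boundsPre}). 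So the quantity $\alpha$ enters as a slack term $\alpha f_{TBO}(N')$ on the comparison node, not as a ceiling on the solution's own heuristic value; your route proves a stronger statement that is simply not true.
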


\begin{proof}

Since any expansion of a parent node represents the subtraction of an assignment, any given node $N$ is guaranteed to have fewer robots assigned than its parent $N_p$. This observation, 
when combined with our assumption that coalition performance can never improve when any one of the assigned robots is taken away (i.e., no robot contributes negatively),
we can see that
 \begin{equation}
 \label{equ:monoNSQ}
    f_{NAC}(N) \geq f_{NAC}(N_p),
 \end{equation}
since a higher allocation efficacy results in a lower NAC heuristic value and vice versa. Consequently, we can infer that the NAC value of all nodes in the unopened set $\mathcal{U} \subseteq \mathcal{N}$ of the \algoname~graph is greater than or equal to that of their respective predecessors in the opened set $\mathcal{O} \subseteq \mathcal{N}$.
As such, the smallest NAC value in the unopened set must be greater than that in the opened set:

 \begin{equation}
 \label{equ:schedIncrease}
     \min_{N \in \mathcal{U}}f_{NAC}(N) \geq \min_{N \in \mathcal{O}}f_{NAC}(N)
 \end{equation}

Irrespective of whether the optimal allocation $\allocationOptimal$ corresponds to a node in the open or unopened set, 
the inequality in Eq.~(\ref{equ:schedIncrease}) implies that
 \begin{equation}
 \label{equ:boundOpt}
\performanceFunctionSet(\allocationOptimal) \leq \max_{N \in \mathcal{O}} \performanceFunctionSet(\allocation_N)
 \end{equation}
where $\allocation_N$ denotes the allocation of a given node $N$. We use $N'$ to denote the node from the open set with the maximum total allocation efficacy. Namely, $N' = \argmax_{N \in \mathcal{O}}\performanceFunctionSet(\allocation_N)$.

\begin{figure*}[t]
    \centering
    \includegraphics[width=0.95\textwidth]{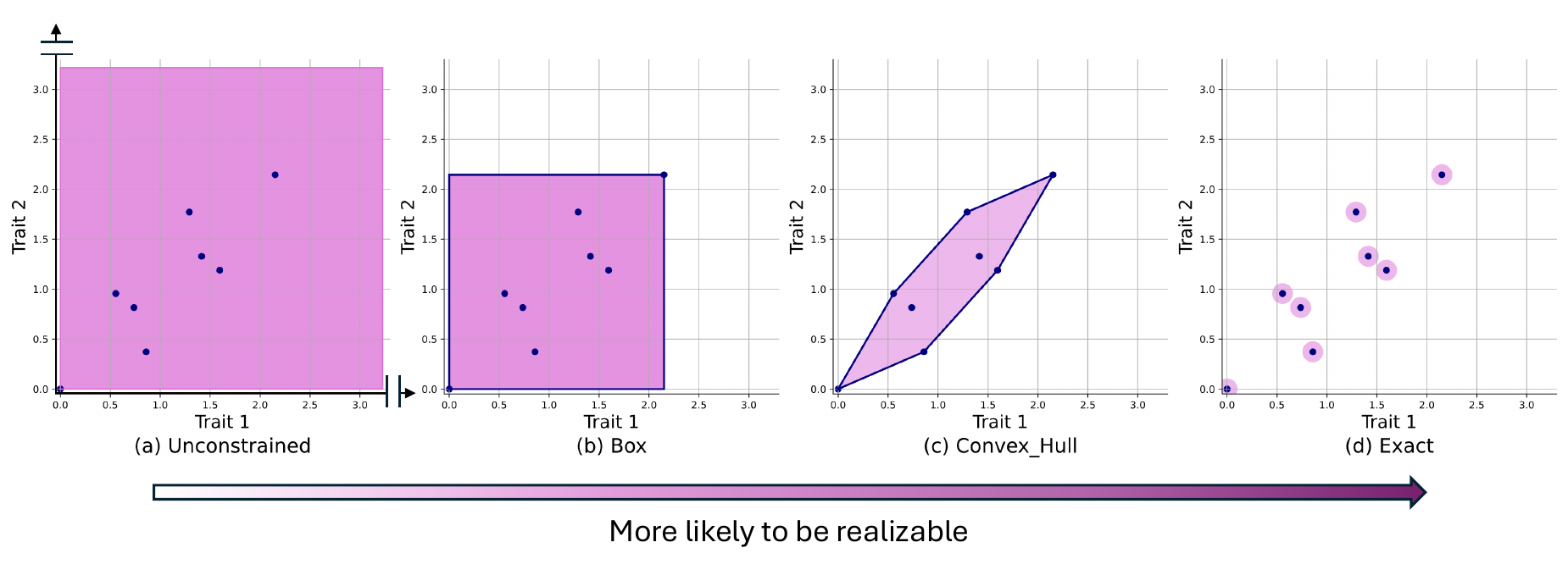}
    \caption{We visualize the subspaces to sample from in all four strategies in an increasing order of samples' realizability, in a scenario with 3 robots, each characterized by 2 traits. For each strategy, the pink region denotes the subspace allowable for sampling when the acquisition function selects additional trait combinations for labeling. Note that in (a), the unconstrained subspace is visualized as a large pink rectangle with axes extending to infinity. To show the exact strategy, we draw a pink circle area around each trait combination only for clarity of demonstration. In fact, only discrete dots in dark blue are considered valid candidates for selection. }
    \label{fig:sampling_relaxation}
\end{figure*}
 
Since we require any valid solution of \algoname{} to respect the time budget, the solution node ($\solutionNode$) will have a zero TBO value. 
As such, the total TETAM heuristic defined in Eq.~(\ref{equ:TETAM}) for the solution node is given by
\begin{equation}
 \label{equ:tetamsol}
      f_{TETAM}(\solutionNode) = (1- \alpha) f_{NAC}(\solutionNode) 
 \end{equation}
Since \algoname{} performs greedy best-first search, the TETAM value of the solution node is guaranteed to be less than or equal to all nodes in the open set at the time we find the solution $\solutionNode$:
 \begin{equation}
 \label{equ:soldef_queue}
    f_{TETAM}(\solutionNode) \leq f_{TETAM}(N), \\ \forall N \in \mathcal{O} 
 \end{equation}
Expanding the definition of TETAM and using Eq.~(\ref{equ:tetamsol}) yields
 \begin{equation}
 \label{equ:simplify}
    \begin{aligned}
    & (1-\alpha)\frac{ \performanceFunctionSet(\allocation_\text{root}) - \performanceFunctionSet(\allocationSolution)}
    {\performanceFunctionSet(\allocation_\text{root})- \performanceFunctionSet(\allocation_\text{null})}
    \leq \\
    & \alpha  f_{TBO}(N) + (1-\alpha)\frac{\performanceFunctionSet(\allocation_\text{root}) - \performanceFunctionSet(\allocation_{N})}
    {\performanceFunctionSet(\allocation_\text{root}) - \performanceFunctionSet(\allocation_\text{null})}, \forall N \in \mathcal{O}  
    \end{aligned}
 \end{equation}
Using the bound in Eq.~(\ref{equ:boundOpt}), and the fact that $ f_{TBO}(\cdot) \leq 1$, we rewrite the RHS of above equation as
 \begin{equation}
    \label{equ:simplify_2}
    \begin{aligned}
            & (1-\alpha)\frac{\performanceFunctionSet(\allocation_\text{root}) -  \performanceFunctionSet(\allocationSolution)}
            {\performanceFunctionSet(\allocation_\text{root}) - \performanceFunctionSet(\allocation_\text{null})}
            \leq  \\
            & \alpha + (1-\alpha)\frac{\performanceFunctionSet(\allocation_\text{root}) - \performanceFunctionSet(\allocationOptimal)}
            {\performanceFunctionSet(\allocation_\text{root}) - \performanceFunctionSet(\allocation_\text{null})
            },  
    \end{aligned}
 \end{equation}
After rearranging and canceling equivalent terms on both sides, we get
 \begin{equation}
 \label{equ:simplify_3}
 \begin{aligned}
    & (1-\alpha)(\performanceFunctionSet(\allocation_\text{root}) - \performanceFunctionSet(\allocationSolution))
    \leq \\
    & \alpha (\performanceFunctionSet(\allocation_\text{root}) - \performanceFunctionSet(\allocation_\text{null})) + (1-\alpha) (\performanceFunctionSet(\allocation_\text{root})) - \performanceFunctionSet(\allocationOptimal))
\end{aligned}
 \end{equation} 
Rearranging the terms yields the bound in Eq.~(\ref{equ:boundsPre}). 

Further, when $\alpha = 0$, Eq.~(\ref{equ:boundsPre}) simplifies to $\performanceFunctionSet(\allocationOptimal) -\performanceFunctionSet(\allocationSolution) \leq 0$. However, since the optimal solution must achieve the highest total allocation efficacy,  $\performanceFunctionSet(\allocationOptimal) - \performanceFunctionSet(\allocationSolution) = 0 $. As such, \algoname{} returns the solution with optimal allocation efficacy when $\alpha = 0$. \qed
\end{proof}

Note that the above theorem provides sensible bounds on the difference in allocation efficacy between the optimal solution and \algoname' solution only when $0 \leq \alpha < 0.5$. When $\alpha \geq 0.5$, the bound loses significance as it grows beyond the maximum difference in efficacy between $\performanceFunctionSet(\allocation_\text{root})$ and $\performanceFunctionSet(\allocation_\text{null})$. In other words, Eq.~\eqref{equ:boundsPre} holds trivially when $\alpha \geq 0.5$, since $\performanceFunctionSet(\allocation_\text{null})$ has the worst efficacy metric of all allocations and $\performanceFunctionSet(\allocation_\text{root})$ has the best one, meaning a bound greater than $\performanceFunctionSet(\allocation_\text{root}) - \performanceFunctionSet(\allocation_\text{null})$ would not be significant.

\textbf{Post-hoc bound}: The bound in Eq.~(\ref{equ:boundsPre}) can be tightened after the execution to facilitate post-hoc analyses. Specifically, instead of bounding $f_{TBO}(N), \forall N \in \mathcal{O}$ by 1, 
we can exactly compute $f_{TBO}(N')$ for $N'$, which is the node yielding the best allocation efficacy within $\mathcal{O}$.
Then, following similar algebraic manipulations as above yields a tighter bound on the suboptimality gap:
\begin{equation*}
    \label{equ:boundsPost}
     \performanceFunctionSet(\allocationOptimal)- \performanceFunctionSet(\allocationSolution) \leq \frac{\alpha}{1-\alpha} (\performanceFunctionSet(\allocation_\text{root}) - \performanceFunctionSet(\allocation_\text{null})) f_{TBO}(N’)
\end{equation*}

\section{Realizability-Aware Active Learning}
\label{subsec:active_learning_approach}

\begin{figure*}
    \centering
    \includegraphics[width=0.9\linewidth]{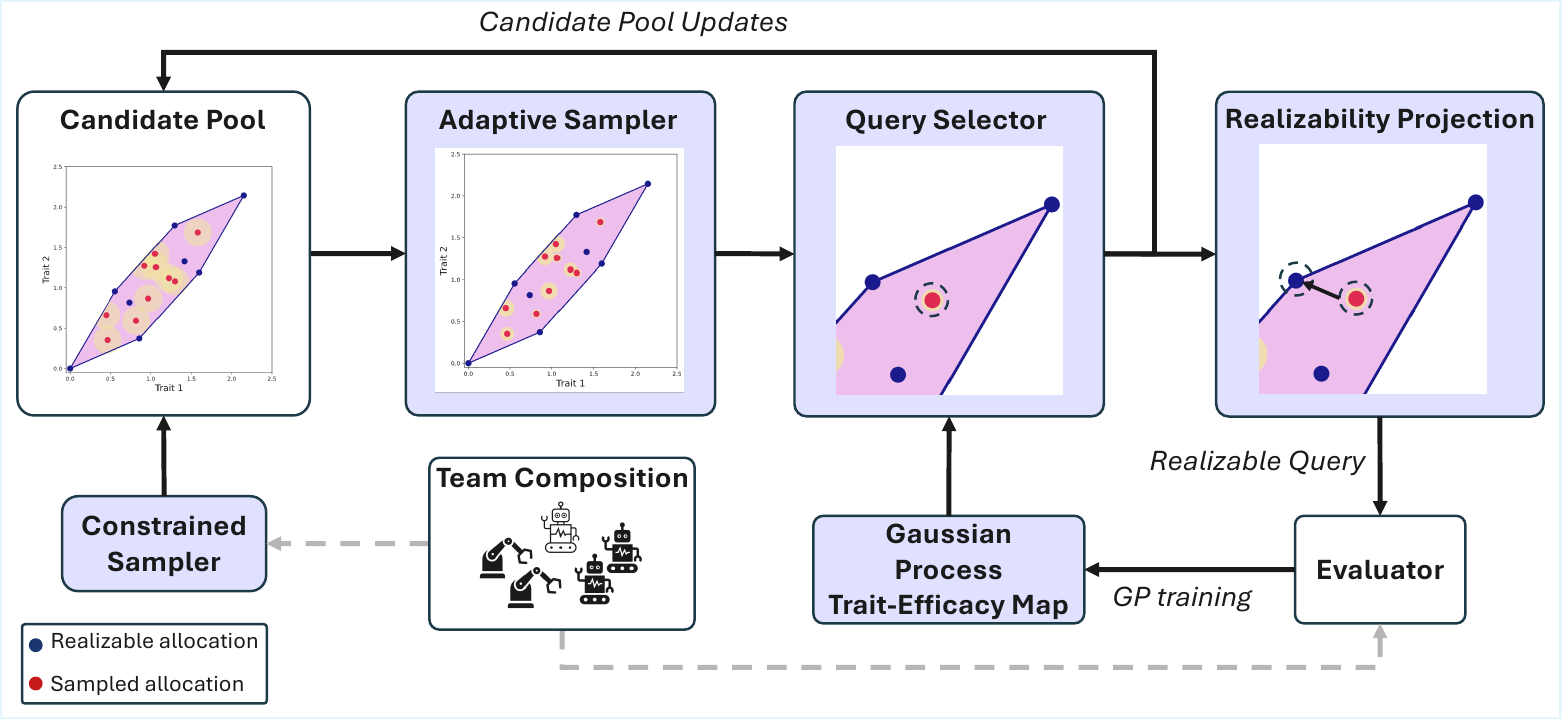}
    \caption{The proposed realizability-aware active learning pipeline. We illustrate our idea using the convex hull-constrained strategy in a simple scenario with 3 robots, each characterized by 2 traits. The box-constrained strategy shares the same pipeline except for its allowable region to sample the candidates. On a 2D plane denoting the trait space, the pink region is encircled by the convex hull of all directly realizable trait combinations in $\mathcal{Y}_{\robotTraitMatrix[bold]}$ (the dark blue dots). Red points within the convex hull are randomly sampled candidates, with each candidate's neighborhood shown as a yellow circle. Note that all neighborhoods are initialized to be the same size, while adaptively shrunk by the zooming mechanism as reflected by smaller yellow circles in the ``Adaptive Sampler''. The UCB-based selector picks the candidate, shown as the dashed circle, followed by the projector projecting it to the nearest neighbor in $\mathcal{Y}_{\robotTraitMatrix[bold]}$. The realizable sample is fed into the simulated evaluator for label, which is leveraged to train the trait-efficacy map. }
    \label{fig:active_learning_pipeline}
\end{figure*}

We now introduce our approach to efficiently learning trait-efficacy maps $\{\performanceFunction_m(\cdot) \}_{m=1}^{M}$. As mentioned in Sec.~\ref{subsec:active_learning_staq}, for each task $m$, its trait-efficacy map $\performanceFunction_m(y_m)$ specifies the relationship between the collective traits $y_m$ allocated to it and the assigned coalition's performance. With the idea of active learning, our goal at each iteration is to sample $y_m,\forall m$. We feed the corresponding allocation matrix $\allocation$ into a simulator for evaluation and labeling. The maps are updated on pairs of collective traits and labels iteratively. 

We model the joint distribution of collective traits assigned to a task and the resulting performance as a Gaussian process (GP), denoted as $P\left(y_m,\performanceFunction_m(y_m)\right) \sim \mathcal{GP}(\mu_m,\Sigma_m), \forall m$, since GP has been shown to effectively model a wide range of functions~\citep{CMTAB,GP-UCB}. This choice allows us to learn the trait-efficacy maps using Gaussian process regression. Correspondingly, we use upper confidence bound (UCB)~\citep{GP-UCB} for evaluating the utility of points in the acquisition function. 

As introduced in Sec.~\ref{subsec:active_learning_staq}, only samples that satisfy $\mathbf{Y} \in \mathcal{Y}_{\robotTraitMatrix[bold]}$ are directly realizable. These samples occupy only a small subspace of the continuous trait space $\mathcal{Y}$ as they are sparsely distributed. 
As such, the intuitive strategy of employing rejection sampling is unreliable, if not infeasible, since it requires checking a prohibitively large number of points in order to acquire sufficient number of realizable samples. 

\textbf{Realizability projection}: One way to overcome the inefficiency of rejection sampling and bridge the gap between the acquisition function and samples' realizability is to use projections. To this end, we first select $\mathbf{Y} \in \mathcal{Y}$ with the highest utility $V(\mathbf{Y})$, and project it onto the realizable set $\mathcal{Y}_{\robotTraitMatrix[bold]}$ by first solving the following optimization,
\begin{equation}
\begin{aligned}
    \allocation' = \argmin_{\allocation \in \allocationSpace}  &\ \lVert \mathbf{Y}-\allocation \robotTraitMatrix[bold]\rVert^2_2, \\
\end{aligned}  
\label{eq:projection_opt}
\end{equation}
which is an integer quadratic program with a quadratic objective. Through solving Eq.~\eqref{eq:projection_opt}, we obtain the allocation matrix that most closely matches the desired aggregated traits $\mathbf{Y}$. Then, the realizable task trait matrix is computed as $\mathbf{Y}'=\allocation'\robotTraitMatrix[bold]$. We call solving Eq.~\eqref{eq:projection_opt} together with computing $\mathbf{Y}'$ the \textit{realizability projection} for the rest of the paper. $\mathbf{A}'$ and $\robotTraitMatrix[bold]$ are fed into the simulator to acquire labels representing the coalitions' performance. With a little abuse of notation, we denote the labels as $\{ \performanceFunction_m \}^{M}_{m=1}$. We train the Gaussian process corresponding to task $m$ with the pair $(\mathbf{y}'_m,{}\performanceFunction_m)$, where $\mathbf{y}'_m$ is the row corresponding to task $m$ in $\mathbf{Y}'_m$. Since we do not impose any constraint when we sample $\mathbf{Y}$ while realizability is purely endowed through projection, we term this strategy as the \textit{unconstrained strategy}.


A prominent failure mode of the unconstrained strategy is that the Gaussian process model may repeatedly rely on data points $\mathbf{Y}'$ that are not necessarily promising (as indicated by low utility values $V(\mathbf{Y}')$) since $\mathbf{Y}'$ can be arbitrarily far away from the original $\mathbf{Y}$. 
This slows down the model from exploiting its knowledge of which parts of the aggregated trait space lead to higher rewards.  

\textbf{Constrained sampling}: In contrast to the unconstrained strategy, one can restrict the pool of candidate aggregated traits that we sample from to only realizable ones in $\mathcal{Y}_{\robotTraitMatrix[bold]}$. Given the team composition matrix \robotTraitMatrix[bold], in each learning iteration, we aim to select $\mathbf{Y}$ via solving
\begin{equation}
\begin{aligned}
    \max_{\mathbf{Y} \in \mathcal{Y}_{\robotTraitMatrix[bold]}} \ V(\mathbf{Y}), 
\end{aligned}
\label{eq:exact_opt}
\end{equation}
which naturally guarantees that $\mathbf{Y}$ is realizable by the robot team and thus does not require the projection step. Since this strategy considers only the set of exactly realizable values for $\mathbf{Y}$, we call it the \textit{exact strategy}. We note that Eq.~\eqref{eq:exact_opt} can be immediately recast as,
\begin{equation}
\begin{aligned}
    \max_{\mathbf{Y},~\allocation\in\allocationSpace} \quad & V(\mathbf{Y}) \\
    s.t. \quad & \mathbf{Y} = \allocation \robotTraitMatrix[bold]
\end{aligned}
\label{eq:exact_opt_recast}
\end{equation}

Eq.~\eqref{eq:exact_opt_recast} is a mixed-integer optimization program (MIP) whose objective function $V(\cdot)$ provides estimates of the utility of variable $\mathbf{Y}$. Common choices of $V(\cdot)$ include maximum entropy and UCB, which rely on the first and second moments of the distribution of trait-efficacy maps. Since we adopt Gaussian processes to model the trait-efficacy maps, evaluating $V(\cdot)$ requires extracting the means and variances from the GP at every iteration \textit{within the optimizer}, which is not readily supported by most off-the-shelf mixed-integer nonlinear program solvers (MINLP).
Therefore, solving Eq.~\eqref{eq:exact_opt_recast} necessitates either a substantial amount of effort to embed the Gaussian process into the MINLP software or designing certain surrogates to approximate $V(\cdot)$ itself. 

\textbf{Balancing realizability and computational efficiency}: To enable realizability-aware sampling without incurring a large computational or logistical burden, we introduce a spectrum of approaches that approximate the \textit{sampling region} to varying degrees. 
Taking inspiration from~\cite{baptista2018bayesian}, we approximate the realizable set $\mathcal{Y}_{\robotTraitMatrix[bold]}$ by sampling from relaxed, continuous subspaces within $\mathcal{Y}$. 

\textit{Box}: We can compute the total amount of traits provided by the whole team given the team trait matrix $\robotTraitMatrix[bold]$ by summing the rows of it: $\bar{\mathbf{q}} = \mathbf{1}_N^\top\robotTraitMatrix[bold]  \in \mathbb{R}^{U}_{\geq 0}$, where $\mathbf{1}_N$ is an all-one vector of dimension $N$ here. We denote the summation result by a vector $\bar{\mathbf{q}} \in \mathbb{R}^{U}_{\geq 0}$. As such, $\bar{\mathbf{q}}$ serves as a perfect \textit{upperbound} when we sample $\mathbf{Y}$. We also define a zero vector $\mathbf{q}_0=\mathbf{0}\in \mathbb{R}_{\geq 0}^{U}$ as the trivial lower bound. We constrain the region of sampling to the box-shaped subspace bounded between $\bar{\mathbf{q}}$ and $\mathbf{q}_0$, denoted as $\mathcal{B}$. We call this strategy, which only samples from $\mathcal{B}$, as \textit{box-constrained strategy}. 
Selecting $\mathbf{Y}$ in this case is framed as solving 
\begin{equation*}
\begin{aligned}
    \max_{\mathbf{Y} \in \mathcal{B}}  &\ V(\mathbf{Y}) \\
\end{aligned}
\label{eq:box_opt}
\end{equation*}

\textit{Convex hull}: We can further obtain a tighter (but still continuous) approximation of $\mathcal{Y}_{\robotTraitMatrix[bold]}$ by considering sampling from the convex hull of the points within $\mathcal{Y}_{\robotTraitMatrix[bold]}$. We call this idea the \textit{convex hull-constrained strategy}, or convex-hull strategy for short. We define $\mathbf{S} \in \mathcal{S} = [0,1]^{M\times N}$ as the relaxed continuous correspondent of allocation matrix \allocation. By sampling a $\mathbf{S}$ randomly and computing $\mathbf{Y} = \mathbf{S}\robotTraitMatrix[bold]$, we obtain a random sample within the convex hull mentioned above. Picking the sample to label is essentially solving,
\begin{equation}
\begin{aligned}
    \max_{\mathbf{\mathbf{Y}},~\mathbf{S}} \quad&\ V(\mathbf{Y}) \\
    s.t.\quad &\mathbf{Y}  = \mathbf{S} \mathbf{\robotTraitMatrix},\\ 
    &\mathbf{S} \in \mathcal{S}.
\end{aligned}
\label{eq:convex_hull_opt}
\end{equation}

We provide a short proof that the sample $\mathbf{Y}$ lies within the convex hull in Appendix.~\ref{appendixA}. To aid in understanding the thread of thoughts that connects all four strategies together, we provide a visualization of their sampling regions of interest in Fig.~\ref{fig:sampling_relaxation}, showing \textit{the trait space of a single task}. The visualization is for the allocation problem involving 3 robots, while each robot is characterized by two traits (i.e., corresponding to $x$ and $y$ axes on a 2D plane). We arrange the four strategies in the order of increasing realizability, from least realizable in Fig.~\ref{fig:sampling_relaxation}(a) (unconstrained) to perfectly realizable in Fig.~\ref{fig:sampling_relaxation}(d) (exact). Between the two extremes, Fig.~\ref{fig:sampling_relaxation}(b) and Fig.~\ref{fig:sampling_relaxation}(c) exemplify that honoring realizability can manifest as controlling the subspace to sample from. 

\begin{figure}
    \centering
    \includegraphics[width=0.75\linewidth]{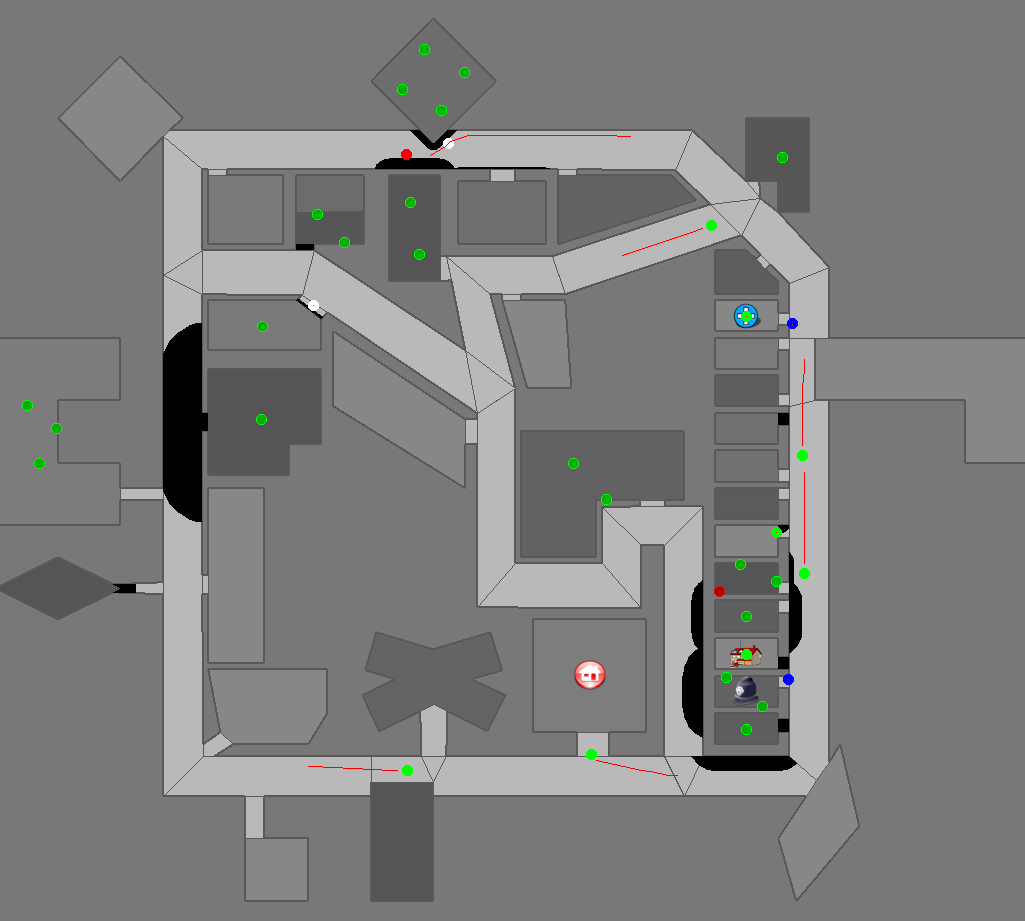}
    \caption{The layout of a RoboCup search and rescue scenario. Black obstacles indicate that pathways are (partially) obstructed. Green dots represent alive civilians waiting for rescue. The red dots are fire brigades, blue dots are police forces, and white dots are ambulance vehicles. }
    \label{fig:robocup}
\end{figure}

An overview of our active learning pipeline with realizability awareness is shown in Fig.~\ref{fig:active_learning_pipeline}. Given the team composition encoded by \robotTraitMatrix[bold], the constrained sampler randomly samples a set of trait combinations within either the convex hull or the box-shaped region to initialize the candidate pool. Then, the active learning procedure loops through the following steps until its budget to query the simulator runs out: (i) The adaptive neighborhood sampler samples a small number of points around each candidate within its radius. We call these points neighbors. (ii) The Gaussian process model evaluates each candidate by aggregating the UCB scores of its neighbors; (iii) The UCB-based selector selects a candidate with maximum aggregate score; (iv) The selected candidate is projected to its nearest neighbor in $\mathcal{Y}_{\robotTraitMatrix[bold]}$, termed as the realizable query. (v) The simulator accepts the allocation matrix \allocation~corresponding to the realizable query, simulates the task, and returns a score. 
(vi) The GP model is updated with the new pair of realizable query and score. 

Note that across our unconstrained, box, and convex hull strategies, we adopt the \textit{zooming mechanism} from~\cite{srinivas2009gaussian,CMTAB} to adaptively ``zoom-in'' on promising $\mathbf{Y}$ samples with a high utility score. The sampling strategy starts with a random set of candidate collective traits. Each candidate is associated with a local neighborhood, and we evaluate the utility of each candidate by aggregating the utilities of a set of neighbors within the neighborhood. The size of the neighborhood for each candidate is dynamically adjusted as the active learning process goes, with more frequent selections leading to a shrunk neighborhood. Such an adaptive mechanism allows us to zoom in on promising candidates and scrutinize them with finer granularity. Note that~\cite{CMTAB} implemented the zooming algorithm via discretizing the trait space into uniform grids. For our convex-hull-constrained strategy, generating uniform lattices with a specified number of on-grid points within a convex polytope by itself is a challenging problem. Therefore, we replaced the grid-based discretization with random uniform sampling while preserving the core idea of adaptive discretization. 

\section{Experiments and Results}
\label{sec:evaluations}


\begin{figure*}[t]
\begin{center}
    \includegraphics[width=0.9\textwidth, trim=0 0 0 0, clip]{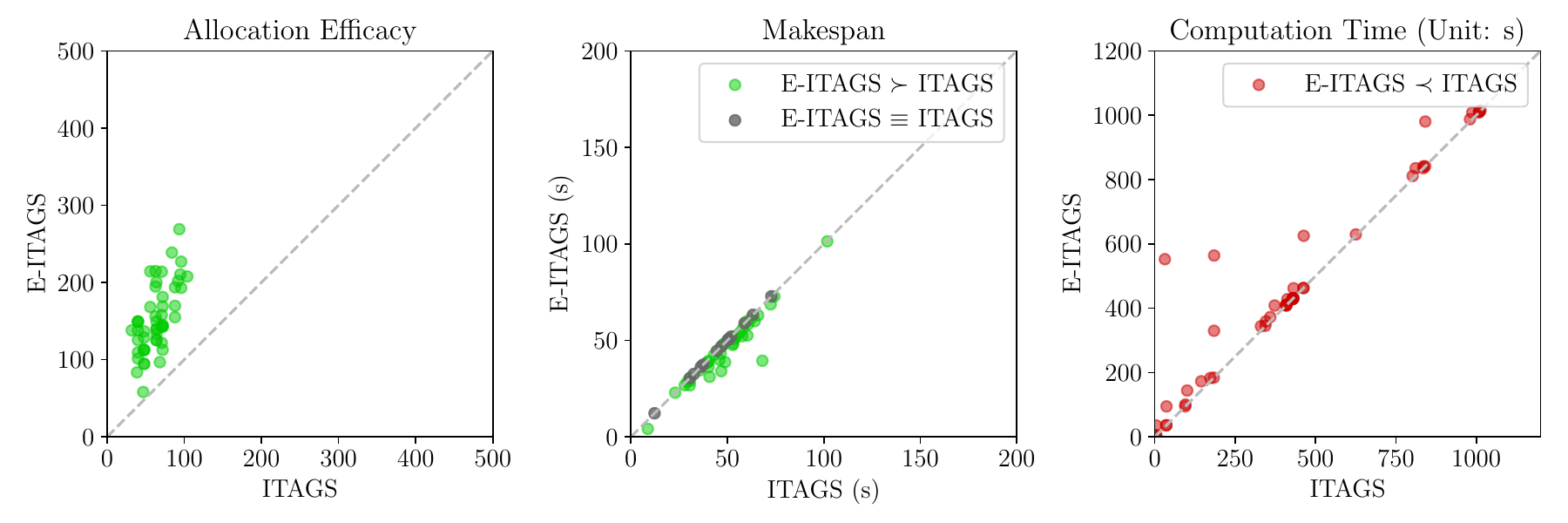}  
\end{center}
\caption{
\small{
Comparison of \algoname' allocation efficacy, makespan, and computation time against ITAGS. 
\algoname~consistently generates solutions of superior efficacy (left) while simultaneously ensuring that its makespan is better than or equal to that of ITAGS (middle). Green dots indicate \algoname{} performs better than ITAGS and grey dots indicate \algoname{} and ITAGS perform equally. Larger allocation efficacy and smaller makespan are desirable. Red dots indicate that \algoname{}' benefits over ITAGS comes at the cost of slightly worse computation time (right).
}
}
\label{fig:comp}
\end{figure*}

We evaluated \algoname{} using three sets of experiments in a simulated emergency response domain~\citep{Kitano1999,Bechon2014,Messing2020,Whitbrook2015,Zhao2016}. First, we compared \algoname{} against a state-of-the-art trait-based task allocation method regarding allocation performance. Second, we examined if the theoretical bounds on resource allocation suboptimality hold in practice. We went on to evaluate the realizability-aware active learning module in \algoname{}. The evaluation is first conducted via numerical simulations, followed by evaluating the learned models holistically as part of \algoname{}' heuristic search. 

\subsection{Experimental Domain}
\label{subsec:Experimental_domain}
We first provide a brief introduction to the experiment domain that we used throughout the evaluations. For our experiments in Sec.~\ref{subsec:ITAGS_comparison} and Sec.~\ref{subsec:validation_makespan}, we adopt the RoboCup Search and Rescue scenario~\citep{kitano2001robocup,Kitano1999,robocupwebsite} as a running domain. 
In this simulated domain, a heterogeneous team of robots, involving police force, ambulance vehicles, etc., is tasked to search for and rescue civilians after disasters such as fires have happened. 
We show a screenshot of this domain during one search and rescue mission in Fig.~\ref{fig:robocup}. 
Additionally, in Sec.~\ref{subsec:holistic_eval}, we further rely on the RoboCup Rescue simulator to provide ground-truth labels to our active learning pipeline. 

\subsection{Evaluating efficacy Optimization under Time Budgets}
\label{subsec:ITAGS_comparison}

We evaluate \algoname' performance on 50 problem instances, and contrast it with ITAGS~\citep{Neville2021,neville-2023-d-itags}. We choose to compare our approach with ITAGS as i) it has been shown to perform better than other state-of-the-art time-extended task allocation algorithms, and ii) ITAGS is also a trait-based approach that shares similar assumptions allowing for a more reasonable comparison. A key difference is that ITAGS minimizes makespan subject to thresholded allocation requirements, while \algoname{} optimizes allocation performance subject to maximum-makespan requirements. Considering that ITAGS is agnostic of the notion of trait-efficacy maps, we assume access to the ground truth of these functions, which are modeled as weighted linear combinations of coalitions' traits.
For each problem instance, we first compute a solution using ITAGS. The allocation efficacy of ITAGS' solution is computed using the ground truth trait-efficacy maps. 
We then evaluated \algoname' ability to improve ITAGS' allocation efficacy when constraining \algoname' time budget to be equal to ITAGS solution's makespan. 

\begin{figure}[b]
    \centering
    \includegraphics[width=0.97\linewidth]{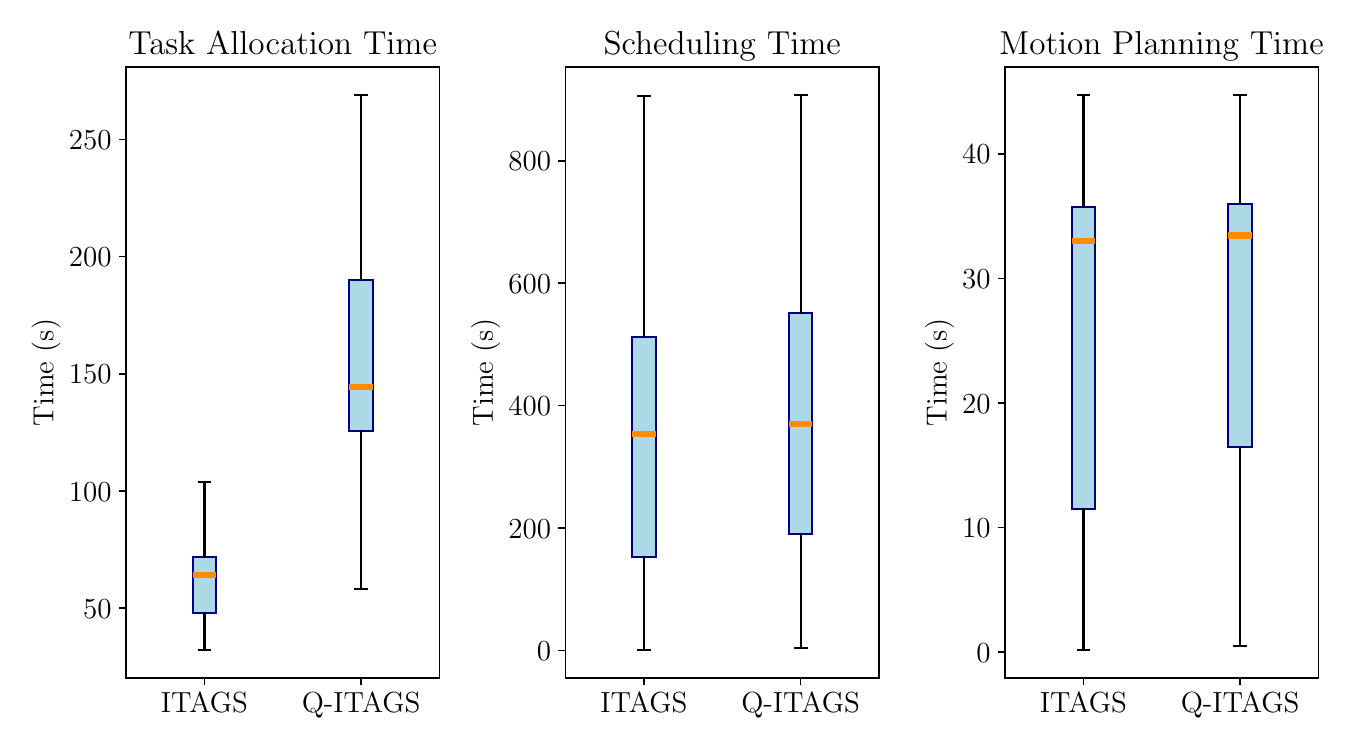}
    \caption{
    \small{We compare the runtime of \algoname{} and ITAGS across three crucial modules: task allocation, scheduling, and motion planning. The increase in total runtime can be attributed to \algoname{} spending more time searching through the task allocation graph. }
    }
    \label{fig:runtime_analysis}
\end{figure}

As shown in Fig. \ref{fig:comp} (\textit{left} and \textit{middle}), \algoname{} consistently improves the allocation efficay for all problems, while ensuring that its makespan is either equal to or lower than that of ITAGS. 
This is explained by the fact that \algoname{} can ensure that the makespan doesn't exceed that of ITAGS and yet search for a better solution that maximizes task performance. However, this improved performance comes at the cost of a slight increase in computation time as seen in Fig.~\ref{fig:comp} (\textit{right}). 

To understand where the increase in computation time comes from, we perform a more fine-grained analysis to compare the time \algoname{} and ITAGS spend on task allocation, scheduling, and motion planning, respectively. The results are summarized as box plots in Fig.~\ref{fig:runtime_analysis}. We observe that while both approaches spend comparable time on scheduling and motion planning, \algoname{} on aggregate requires more time for task allocation search, explaining the increased computational efforts in Fig.~\ref{fig:comp}. 
Since node evaluation makes the major portion of the amount of computation in task allocation search, we performed a Welch's \textit{t}-test on the number of nodes evaluated by \algoname{} and ITAGS to find the solution allocation plan. Our test gives a \textit{p}-value of $0.0047$, rejecting the null hypothesis and confirming that compared with ITAGS, \algoname{} statistically explores more nodes before it lands on an allocation plan whose makespan satisfies the time budget.  

\subsection{Validation of Bounds on Optimality Gaps}
\label{subsec:validation_makespan}

In our second experiment, we empirically examined the validity of our theoretical guarantees on allocation efficacy from Sec. \ref{sec:theorems}. To this end, we varied $\alpha$ between $[0,1]$ in increments of $0.1$, and solved each of the 50 problem instances from the previous section for each value of $\alpha$. For every combination of problem and $\alpha$ value, we computed the actual normalized optimality gap and the corresponding normalized theoretical bound in Eq.~(\ref{equ:boundsPre}). We found that the optimality gaps consistently respect the theoretical bound across all values of $\alpha$ (see Fig. \ref{fig:proof}). As we expected, the extreme values $\alpha=0$ (ignoring TBO) and $\alpha=1$ (ignoring NAC) respectively result in the best and worst allocation efficacy.

\begin{figure}[h]
\begin{center}
    \includegraphics[width=0.85\columnwidth, trim=0 0.6cm 0 0, clip]{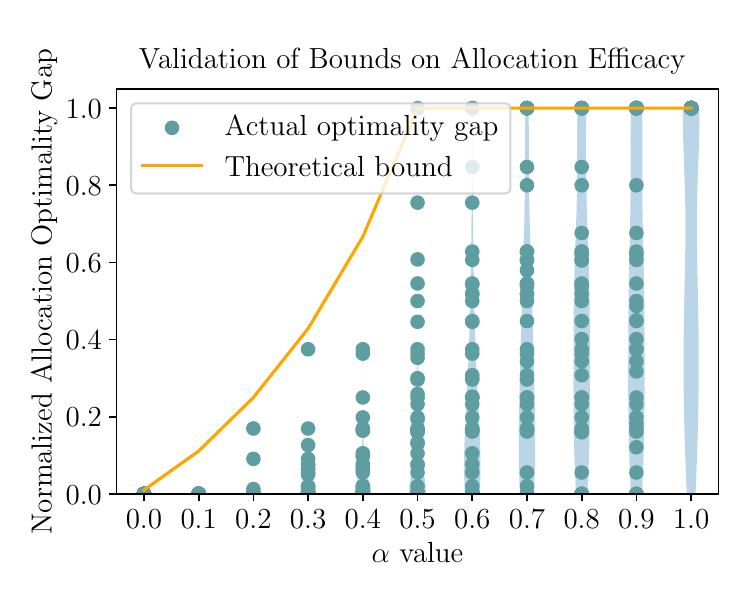}
\end{center}
\caption{
\small{
The theoretical bound consistently holds for varying values of $\alpha$. 
A value of $0$ for a normalized optimality gap represents an optimal allocation, and a value of $1$ represents the worst possible allocation seen within the experiments.
}}
\label{fig:proof} 
\end{figure}

\begin{figure}
    \centering
    \includegraphics[width=0.98\linewidth]{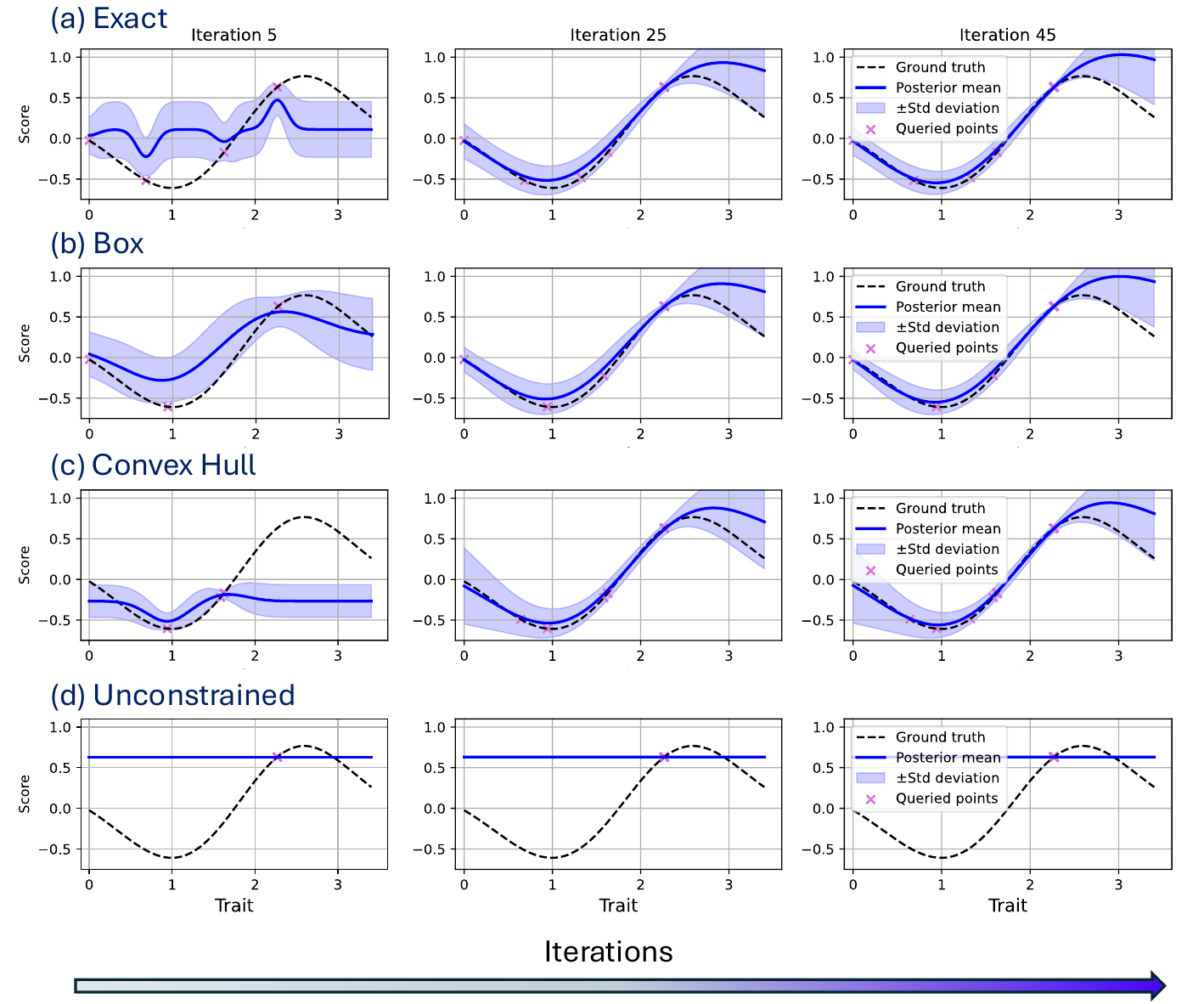}
    \caption{Visualization of the process of iteratively updating GPs in a simple scenario, where all strategies learn to allocate 3 robots to one task. Only one trait is considered for ease of demonstration. We show the ground-truth GP, fitted function ($\pm$ one standard deviation), and the sampled trait values. All but the unconstrained strategy are able to learn. }
    \label{fig:gps_1d}
\end{figure}

\begin{figure*}
\begin{center}
    \includegraphics[width=0.85\textwidth, trim=0 0 0 0, clip]{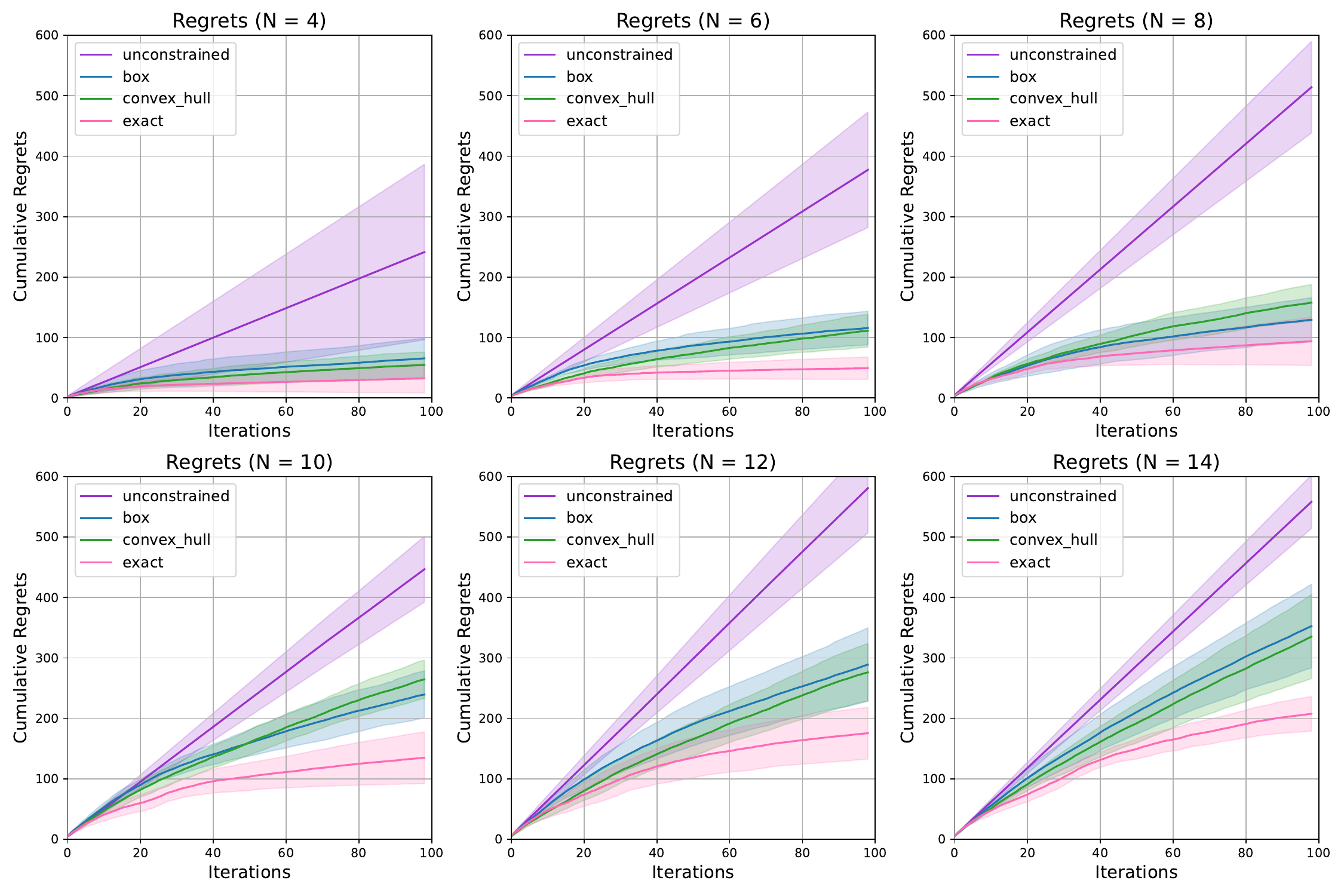}  
\end{center}
\caption{
Numerical simulation results comparing the four methods with a scaling number of robots. There are three tasks in this experiment. Robots and tasks are modeled by 4 different traits. We show the cumulative regrets averaged across 5 trials, with the shaded region indicating $\pm$ one standard deviation. 
}
\label{fig:cumu_regrets}
\end{figure*}
\subsection{Evaluating Realizability-Aware Active Learning}
\label{subsec:evaulate_active_learning}
In Sec.~\ref {subsec:active_learning_staq}, we proposed four different strategies for selecting the sample to query upon: unconstrained, exact, box-constrained, convex hull-constrained. We now evaluate these four strategies and compare their ability to learn the unknown trait-efficacy maps and uncover rewarding allocation plans. Our experiment results consist of: (i) numerical simulations with a simple 1D trait space that aim at illustrating the learning behaviors of different strategies qualitatively, (ii) comprehensive quantitative numerical simulations that show the regrets under different strategies,  
and (iii) evaluations on the RoboCup Search and Rescue simulator. 
For both (i) and (ii), the ground truth trait-efficacy mapping functions are sampled from Gaussian processes with covariances specified by Radial Basis Function (RBF) kernels. For (iii), we rely on the simulator to return a score as label given a coalition of robots and their capabilities. For all our experiments, the integer quadratic program in Eq.~\eqref{eq:projection_opt} is solved using \texttt{cvxpy}~\citep{diamond2016cvxpy,agrawal2018rewriting}. 


We first discuss results from numerical simulations of a simple scenario in which we allocate three robots (characterized by only one trait) to one task. The task's ground truth trait-efficacy map is sampled from a Gaussian process and kept fixed. The trait values for robots are sampled randomly from $[0, 1]$. To shed light on different strategies' learning behaviors throughout the learning process, we visualize the queried trait vectors $\mathbf{Y}$ and the fitted Gaussian process model, which are both progressively updated, across varying iterations. The results are shown in Fig.~\ref{fig:gps_1d}. We show the mean of GP model as solid blue curve, its variance as blue shades, the ground truth trait-efficacy map as dashed black curve, and the queried trait vectors as purple crosses. Notably, the unconstrained strategy almost learns nothing even after 45 iterations, while all the other three realizability-aware strategies can effectively learn the underlying function. The unconstrained strategy also tries out distinctly fewer trait combinations, implying its incompetence in exploration. 
This reveals that the unconstrained strategy, while capable of sampling diverse set of points, unproductively explores
the entire
trait space. The sampled points might be far away from the nearest realizable ones. From the observed harmful effects, we uncover that allowing the sampling strategy to remain oblivious of realizability induces 
\textit{unproductive diversity},
impeding efficient learning. 

With the qualitative learning process in mind, we comprehensively compare the four strategies on a suite of experiments involving 3 tasks characterized by 4 different traits. We vary the number of robots to allocate from 4 to 15. Given that the ground truth trait-efficacy maps for tasks are sampled and known, we can pre-evaluate all the possible allocation plans to obtain the ground truth optimal allocation for each task. Then, we can leverage it to compute the regret during active learning. In Fig.~\ref{fig:cumu_regrets}, we present the cumulative regrets averaged across 5 trials for a varying number of robots. We observe that across the board, the exact method consistently achieves the best performance with slowly-increasing cumulative regrets. The unconstrained strategy, in stark contrast, shows almost linearly increasing cumulative regrets, which indicate that it keeps querying data points that are not promising. 
This comparison reveals that choosing to be oblivious of sample's realizability while only compensating for it from hindsight by projection is very inefficient. 
The box and convex hull-constrained strategies induce significantly lower cumulative regrets compared with the unconstrained one, and achieve performance that is similar to the exact case, especially for small-size problems (with $\leq 8$ robots). It is evident that by injecting realizability awareness, the active learning pipeline attains a more reasonable exploration-exploitation tradeoff and better data efficiency. We also point out that across all strategies, a larger number of robots to allocate results in higher cumulative regrets. This stems from the fact that a larger combinatorial space naturally arises with a larger team size, and thus, the active learning for rewarding allocations becomes more challenging.

\begin{figure}
    \includegraphics[width=0.48\textwidth, trim=0 0 0 0, clip]{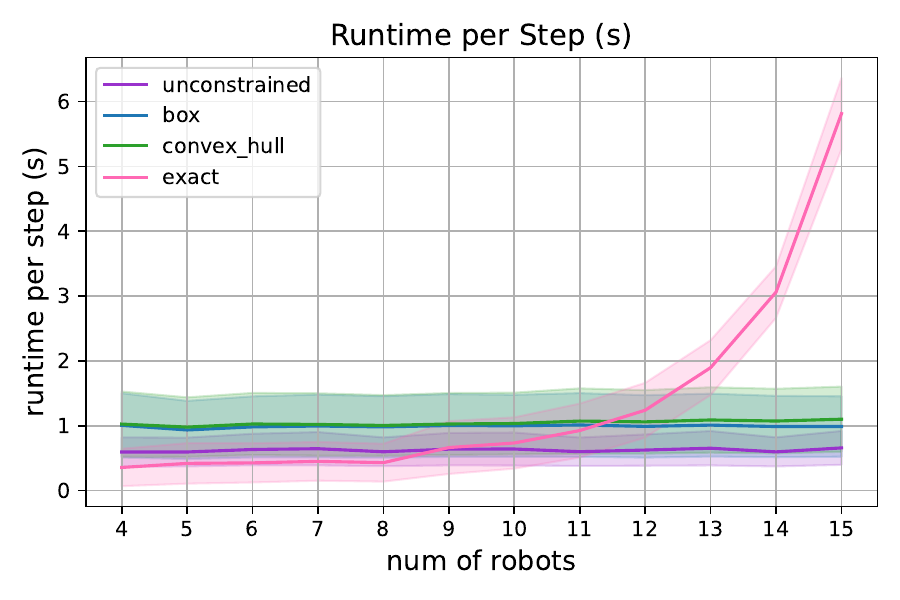}
\caption{The comparison of single-step runtime for the experiment shown in Fig~\ref{fig:cumu_regrets}. We show that while the runtime per step for exact strategy blows up as the number of robots increases, the runtime for all the other three strategies remains nearly constant. }
\label{fig:runtime}
\end{figure}

\begin{figure*}
    \centering
    \includegraphics[width=0.98\linewidth]{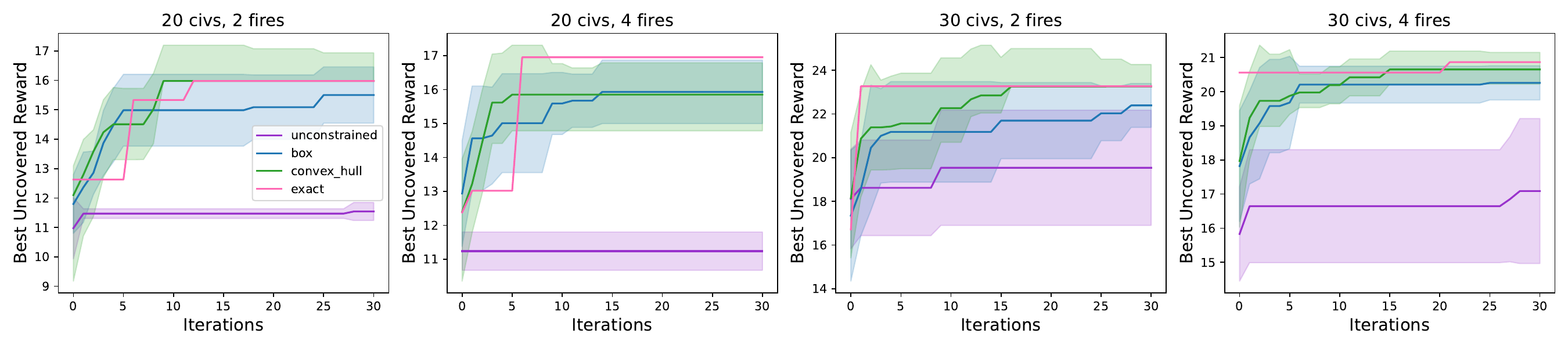}
    \caption{Comparing the evolution of Best Uncovered Reward (BUR) across the four constraining strategies on the RoboCup Search and Rescue simulator. The subfigures denote four different tasks, differentiated by their severity levels dictated by the number of civilians to save and the number of fires. For each task and strategy, we run the active learning module 5 times and report the mean as solid curves and the standard deviation as shaded regions. }
    \label{fig:robocup_uncovered_rewards}
\end{figure*}
\begin{figure*}
    \centering
    \includegraphics[width=0.95\linewidth]{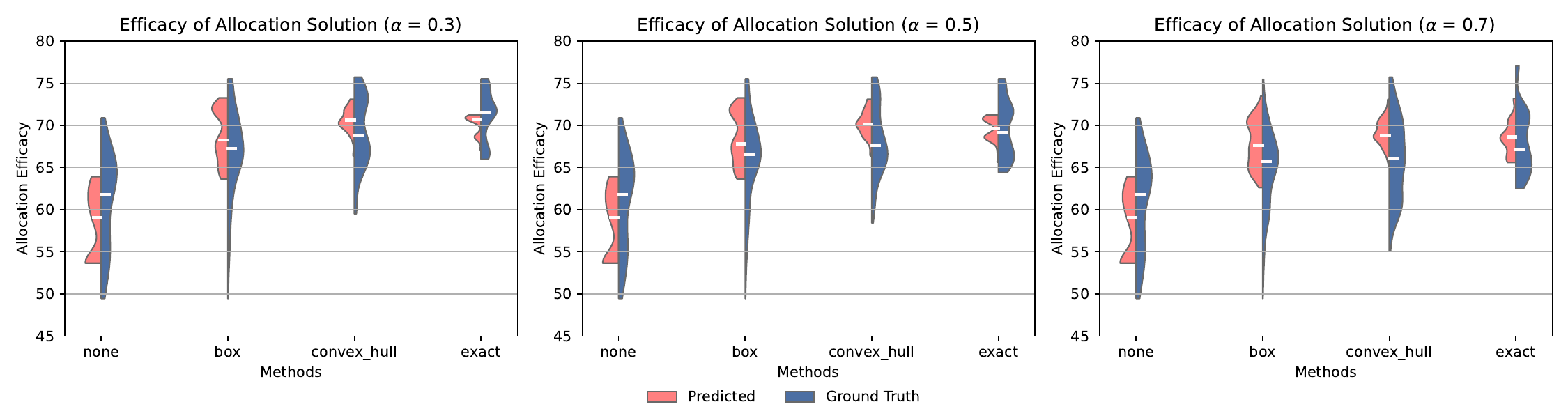}
    \caption{Total allocation efficacy of allocation plans when plugging in the learned models into downstream graph search with varying weights $\alpha=0.3, 0.5$ and $0.7$. All red violin plots come from efficacy scores predicted by the models, while all blue violin plots are obtained from groundtruth. White horizontal bars denote the median efficacy. }
    \label{fig:end2end_evaluation}
\end{figure*}

We additionally compared the four strategies' single-step runtime (See Fig.~\ref{fig:runtime}). While unconstrained, box, and convex hull strategies maintain a nearly constant single-step runtime, the time it takes for the exact strategy increases drastically once the number of robots exceeds 10. When there are 15 robots to allocate, it takes the exact strategy almost $6$ seconds to run a single step. The sharp increase in computation time originates from the exact strategy's mechanism of checking every possible allocation and its corresponding trait combination. To evaluate a trait combination, the exact strategy has to query its GP model. Since the number of possible allocations increases exponentially with the number of robots, the accumulation of local GP queries leads to the observed heavy computation burden. 
This issue is sidestepped by the other three approaches since they evaluate a fixed number of candidates at each iteration. When combined with the results shown in Fig.~\ref{fig:cumu_regrets}, it is clear that the exact strategy is preferable for small-scale problems with only a few robots. As the team size scales up, however, the box-constrained and convex hull-constrained strategies become a more reasonable choice thanks to their computational efficiency and comparable learning efficiency and effectiveness. 

\subsection{Holistic Evaluation}
In our final set of experiments, we evaluate the overall performance of our approach by learning the trait-efficacy maps and leveraging the learned maps to perform efficacy-optimized task allocation. 
\label{subsec:holistic_eval}
\subsubsection{Learning Trait-efficacy Maps in RoboCup}
We further verify our findings in Sec.~\ref{subsec:evaulate_active_learning} by interfacing the active learning module with the RoboCup Search and Rescue simulator~\citep{kitano2001robocup,Kitano1999,robocupwebsite}.
The RoboCup simulator allows us to change the number of civilians to rescue and the number of fires, which serve as surrogate indicators of task severity and difficulty. We consider three types of agents as the robots we aim to allocate: fire brigades, police forces, and ambulance vehicles. They are in charge of putting out fires, clearing blocked roads, and transporting civilians to hospitals, respectively. 
Since this simulator does not support parallel simulations, we simulate one task at a time and measure the assigned coalition's performance (i.e., efficacy) as measured by the score computed by the simulator, with higher scores representing better task performance. We simulate tasks of different severity levels separately.  

Since we no longer have access to the ground truth trait-efficacy maps, we report \textit{Best Uncovered Rewards} (BUR) as the metric to evaluate our four sampling strategies. BUR is the highest uncovered reward at each iteration, and reveals how efficient each method is in terms of identifying rewarding robot assignments. Fig.~\ref{fig:robocup_uncovered_rewards} shows the BUR results for 4 respective tasks. The comparative disadvantage of the unconstrained approach, which we previoiusly observe in numerical simulations, still holds here. Overall, the exact strategy consistently performs the best, discovering coalitions giving rise to high scores quickly. The box and convex hull-constrained strategies perform comparably to the exact one, though sometimes converge to suboptimal allocations or demonstrate larger BUR variances. 

\subsubsection{Leveraging Learned Trait-efficacy Maps for STEAM}
To measure the downstream implications of our active learning approach, we utilize the learned trait-efficacy maps from the previous subsection to compute the NAC heuristic in order to guide \algoname{}' incremental search. 
We randomize the robots' initial and task configurations across 30 problem instances. We vary both task durations and the allotted time budgets $C_{max}$. We also repeated this experiment thrice by choosing $\alpha = 0.3, 0.5, 0.7$ to evaluate how the learned efficacy models manifest within \algoname{} under varying degrees of relative prioritization of allocation efficacy and makespan. We show both ground truth and predicted total allocation efficacies as half violin plots in Fig.~\ref{fig:end2end_evaluation}. These plots enable us to analyze how approximating realizability constraints to different degrees in the interest of computational efficiency during learning impacts downstream task performance during task allocation. We can also see how accurate different approaches are in their predictions. 

As one would expect, we see that the exact sampling strategy gives rise to allocation solutions whose predicted efficacies closely match the ground truth. Further, the true allocation efficacies of the solutions associated with the exact method remain the highest across the board among the four sampling strategies. Importantly, the exact strategy is closely followed by the convex hull constrained and then by box constrained strategies, exhibiting slightly degraded total efficacies that can be viewed as the ``cost of approximation''. At the other end, the allocation plans generated by the unconstrained strategy are of the worst efficacy, indicating that learning trait-efficacy maps without any regard for practical realizability constraints can significantly reduce performance. Finally, these trends remain mostly robust as we vary the hyperparameter $\alpha$, with the differences between methods marginally reducing as we increasingly prioritize time budget over task allocation efficacy with a larger $\alpha$.

\section{Conclusion}
\label{sec:conclusion}


In this work, we introduced Spatio-Temporal Efficacy-optimized Allocation for Multi-robot systems (STEAM) problem, a new formulation that explicitly couples allocation efficacy with temporal feasibility. To address this challenge, we developed E-ITAGS, an interleaved graph search algorithm that integrates allocation, scheduling, and motion planning under strict time budgets. Unlike prior methods that rely on brittle assumptions about task decomposition or binary success criteria, E-ITAGS leverages trait-efficacy maps—learned through a realizability-aware active learning module—to capture nuanced, continuous relationships between collective capabilities and task performance.

Our contributions are threefold. First, we provide a generalizable formalism that unifies task allocation efficacy and temporal feasibility in heterogeneous multi-robot systems. Second, we design a computationally efficient interleaved search procedure guided by novel heuristics that balance allocation efficacy against makespan, with accompanying theoretical guarantees on suboptimality bounds. Third, we advance active learning for MRTA by introducing realizability-aware sampling strategies that significantly improve data efficiency and computational tractability in combinatorial domains. 

Comprehensive evaluations in simulation and an emergency response domain confirm that E-ITAGS consistently delivers allocation plans of higher performance than state-of-the-art baselines while respecting strict temporal and resource constraints. Beyond performance gains, our results demonstrate the practicality of integrating active learning into task allocation systems that must operate under limited knowledge of task requirements.

Collectively, these contributions position E-ITAGS as a robust and extensible framework for spatio-temporal coordination in heterogeneous multi-robot systems operating under stringent temporal and resource constraints. Future work will extend this framework to broader domains by integrating realizability-aware active learning with large-scale simulators and real-world deployments, thereby enabling adaptive, high-quality performance in dynamic and uncertain environments. 






\begin{acks}
This work was supported in part by the Army Research Lab under Grants W911NF-17-2-0181 (DCIST CRA).
\end{acks}




\bibliographystyle{SageH}
\bibliography{references.bib}

@article{q-itags,
  title={Q-ITAGS: Quality-Optimized Spatio-Temporal Heterogeneous Task Allocation with a Time Budget},
  author={Neville, Glen and Liu, Jiazhen and Chernova, Sonia and Ravichandar, Harish},
  journal={arXiv preprint arXiv:2404.07902},
  year={2024}
}

@article{neville-2023-d-itags,
  title={{D-ITAGS}: a dynamic interleaved approach to resilient task allocation, scheduling, and motion planning},
  author={Neville, Glen and Chernova, Sonia and Ravichandar, Harish},
  journal={IEEE Robotics and Automation Letters},
  volume={8},
  number={2},
  pages={1037--1044},
  year={2023},
  publisher={IEEE}
}

@article{Giordani2013,
    title = {{A distributed multi-agent production planning and scheduling framework for mobile robots}},
    year = {2013},
    journal = {Computers and Industrial Engineering},
    author = {Giordani, Stefano and Lujak, Marin and Martinelli, Francesco},
    number = {1},
    volume = {64},
    doi = {10.1016/j.cie.2012.09.004},
    issn = {03608352}
}

@article{Gerkey2004,
    title = {{A formal analysis and taxonomy of task allocation in multi-robot systems}},
    year = {2004},
    journal = {International Journal of Robotics Research},
    author = {Gerkey, Brian P and Matari{\'{c}}, Maja J.},
}

@article{Zhao2016,
    title = {{A Heuristic Distributed Task Allocation Method (PIA)}},
    year = {2016},
    journal = {IEEE Transactions on Cybernetics},
    author = {Zhao, Wanqing and Meng, Qinggang and Chung, Paul W.H.},
    number = {4},
    month = {4},
    pages = {902--915},
    volume = {46},
    publisher = {Institute of Electrical and Electronics Engineers Inc.},
    issn = {21682267},
    keywords = {Distributed task allocation, multitask, multivehicle, overall objective, search and rescue}
}

@inproceedings{Whitbrook2015,
    title = {{A novel distributed scheduling algorithm for time-critical multi-agent systems}},
    year = {2015},
    booktitle = {IEEE International Conference on Intelligent Robots and Systems},
    author = {Whitbrook, Amanda and Meng, Qinggang and Chung, Paul W.H.},
    pages = {6451--6458},
    volume = {2015-Decem}
}

@article{Nunes2017,
    title = {{A taxonomy for task allocation problems with temporal and ordering constraints}},
    year = {2017},
    journal = {Robotics and Autonomous Systems},
    author = {Nunes, Ernesto and Manner, Marie and Mitiche, Hakim and Gini, Maria},
    pages = {},
    volume = {90},
    issn = {09218890},
    keywords = {Multi-robot coordination, Task allocation, Taxonomy, Temporal constraints, Time-extended assignments}
}

@inproceedings{Baras2019,
    title = {{An algorithm for routing heterogeneous vehicles in robotized warehouses}},
    year = {2019},
    booktitle = {5th Panhellenic Conference on Electronics and Telecommunications},
    author = {Baras, Nikolaos and Dasygenis, Minas},
    keywords = {automation, autonomous robot, robotics, robotized warehouse, warehouse routing}
}

@inproceedings{Neville2021,
    title = {{An Interleaved Approach to Trait-Based Task Allocation and Scheduling}},
    year = {2021},
    booktitle = {International Conference on Intellifent Robots and Systems (IROS)},
    author = {Neville, Glen and Messing, Andrew and Ravichandar, Harish and Hutchinson, Seth and Chernova, Sonia}
}

@article{Weld1994,
    title = {{An Introduction to Least Commitment Planning}},
    year = {1994},
    journal = {AI magazine},
    author = {Weld, Daniel S},
    number = {4},
    pages = {27--27},
    volume = {15}
}

@article{Neville2020,
    title = {{Approximated dynamic trait models for heterogeneous multi-robot teams}},
    year = {2020},
    journal = {IEEE International Conference on Intelligent Robots and Systems},
    author = {Neville, Glen and Ravichandar, Harish and Shaw, Kenneth and Chernova, Sonia}
}

@inproceedings{srikanthan2022resource,
  title={Resource-Aware Adaptation of Heterogeneous Strategies for Coalition Formation},
  author={Srikanthan, Anusha and Ravichandar, Harish},
  booktitle={International Conference on Autonomous Agents and Multiagent Systems (AAMAS)},
  pages={1732--1734},
  year={2022}
}

@inproceedings{Koes2006,
    title = {{Constraint optimization coordination architecture for search and rescue robotics}},
    year = {2006},
    booktitle = {IEEE ICRA},
    author = {Koes, Mary and Nourbakhsh, Illah and Sycara, Katia},
}

@article{Krizmancic2020,
    title = {{Cooperative Aerial-Ground Multi-Robot System for Automated Construction Tasks}},
    year = {2020},
    journal = {IEEE Robotics and Automation Letters},
    author = {Krizmancic, Marko and Arbanas, Barbara and Petrovic, Tamara and Petric, Frano and Bogdan, Stjepan},
    number = {2},
    pages = {798--805},
    volume = {5},
    issn = {23773766}
}

@inproceedings{gombolay2016,
    title = {{Fast Scheduling of Multi-Robot Teams with Temporospatial Constraints}},
    year = {2016},
    author = {Gombolay, Matthew and Wilcox, Ronald and Shah, Julie},
    doi = {10.15607/rss.2013.ix.049}
}

@inproceedings{McCook2007,
    title = {{Flocking for heterogeneous robot swarms: A military convoy scenario}},
    year = {2007},
    booktitle = {Proceedings of the Annual Southeastern Symposium on System Theory},
    author = {McCook, Christopher J.R. and Esposito, Joel M.},
    pages = {26--31}
}

@article{Messing2020,
    title = {{Forward Chaining Hierarchical Partial-Order Planning}},
    year = {2020},
    journal = {International Workshop on the Algorithmic Foundations of Robotics},
    author = {Messing, Andrew and Hutchinson, Seth}
}

@article{Messing2022,
    title = {{GRSTAPS: Graphically Recursive Simultaneous Task Allocation, Planning, and Scheduling}},
    year = {2022},
    journal = {International Journal of Robotics Research},
    author = {Messing, Andrew and Neville, Glen and Chernova, Sonia and Hutchinson, Seth and Ravichandar, Harish},
}

@article{Roldan2016,
    title = {{Heterogeneous multi-robot system for mapping environmental variables of greenhouses}},
    year = {2016},
    journal = {Sensors},
    author = {Rold{\'{a}}n, Juan Jesús and Garcia-Aunon, Pablo and Garz{\'{o}}n, Mario and de Le{\'{o}}n, Jorge and del Cerro, Jaime and Barrientos, Antonio},
    number = {7},
    volume = {16},
    doi = {10.3390/s16071018},
    issn = {14248220},
    keywords = {Environmental monitoring, Greenhouse, Griculture, Multi-robot, Robotics, Sensory system, UAV, UGV}
}

@article{Stroupe2005,
    title = {{Heterogeneous robotic systems for assembly and servicing}},
    year = {2005},
    journal = {European Space Agency, ESA SP},
    author = {Stroupe, Ashley W. and Huntsberger, Terry and Kennedy, Brett and Aghazarian, Hrand and Baumgartner, Eric T. and Ganino, Anthony and Garrett, Michael and Okon, Avi and Robinson, Matthew and Townsend, Julie Anne},
    number = {603},
    pages = {625--631}
}

@article{Bechon2014,
    title = {{HiPOP: Hierarchical Partial-Order Planning}},
    year = {2014},
    journal = {STAIRS},
    author = {Bechon, Patrick and Barbier, Magali and Infantes, Guillaume and Lesire, Charles and Vidal, Vincent},
    pages = {51--60}
}

@article{Schillinger2018,
    title = {{Improving Multi-Robot Behavior Using Learning-Based Receding Horizon Task Allocation}},
    year = {2018},
    journal = {RSS},
    author = {Schillinger, Philipp and Buerger, Mathias and Dimarogonas, Dimos}
}

@inproceedings{Menell2017,
    title = {{Inverse Reward Design}},
    year = {2017},
    booktitle = {Conference on Neural Information Processing System},
    author = {Hadfield-Menell, Dylan and Milli, Smitha and Abbeel, Pieter and Russell, Stuart and Dragan, Anca D}
}

@inproceedings{Mallozzi2018,
    title = {{MoVEMo: A structured approach for engineering reward functions}},
    year = {2018},
    booktitle = {IEEE International Conference on Robotic Computing 2018},
    author = {Mallozzi, Piergiuseppe and Pardo, Raúl and Duplessis, Vincent and Pelliccione, Patrizio and Schneider, Gerardo}
}

@article{Guerrero2017,
    title = {{Multi-robot coalitions formation with deadlines: Complexity analysis and solutions}},
    year = {2017},
    journal = {PLoS ONE},
    author = {Guerrero, Jose and Oliver, Gabriel and Valero, Oscar},
    number = {1},
    volume = {12}
}

@inproceedings{Bhargava2019,
    title = {{Multiagent disjunctive temporal networks}},
    year = {2019},
    booktitle = {International Conference on Autonomous Agents and Multiagent Systems},
    author = {Bhargava, Nikhil and Williams, Brian}
}

@inproceedings{Matos2021,
    title = {{Multiple Mobile Robots Scheduling Based on Simulated Annealing Algorithm}},
    year = {2021},
    booktitle = {International Conference on Optimization, Learning Algorithms and Applications},
    author = {Matos, Diogo and Costa, Pedro and Lima, José and Valente, António}
}

@article{Mayya2021,
    title = {{Resilient Task Allocation in Heterogeneous Multi-Robot Systems}},
    year = {2021},
    journal = {Robotics and Automation Letters},
    author = {Mayya, Siddharth and D'Antonio, Diego S. and Saldana, David and Kumar, Vijay}
}

@article{Kitano1999,
    title = {{RoboCup rescue: search and rescue in large-scale disasters as a domain for autonomous agents research}},
    year = {1999},
    journal = {Proceedings of the IEEE International Conference on Systems, Man and Cybernetics},
    author = {Kitano, Hiroaki and Tadokoro, Satoshi and Noda, Itsuki and Matsubara, Hitoshi and Takahashi, Tomoichi and Shinjou, Atsuhi and Shimada, Susumu},
    pages = {739--743},
    volume = {6}
}

@misc{robocupwebsite,
  author       = {RoboCup Federation},
  year         = {2025},
  title        = {RoboCup official website},
  howpublished = {\url{https://www.robocup.org/}},
  note         = {Accessed: 21 September 2025}
}

@article{kitano2001robocup,
  title={Robocup rescue: A grand challenge for multiagent and intelligent systems},
  author={Kitano, Hiroaki and Tadokoro, Satoshi},
  journal={AI magazine},
  volume={22},
  number={1},
  pages={39--39},
  year={2001}
}

@article{Ravichandar2020,
    title = {{STRATA: A unified framework for task assignments in large teams of heterogeneous agents}},
    year = {2020},
    journal = {Autonomous Agents and Multi-Agent Systems},
    author = {Ravichandar, Harish and Shaw, Kenneth and Chernova, Sonia},
    number = {38},
    volume = {34}
}

@article{Prorok2017,
    title = {{The Impact of Diversity on Optimal Control Policies for Heterogeneous Robot Swarms}},
    year = {2017},
    journal = {IEEE Transactions on Robotics},
    author = {Prorok, Amanda and Hsieh, M. Ani and Kumar, Vijay},
    keywords = {Heterogeneous multirobot systems, stochastic systems, swarm robotics, task allocation}
}

@inproceedings{Korsah2012,
    title = {{xBots: An approach to generating and executing optimal multi-robot plans with cross-schedule dependencies}},
    year = {2012},
    booktitle = {International Conference on Robotics and Automation},
    author = {Korsah, G. Ayorkor and Kannan, Balajee and Browning, Brett and Stentz, Anthony and Dias, M. Bernardine}
}

@article{fu2022robust,
  title={Robust task scheduling for heterogeneous robot teams under capability uncertainty},
  author={Fu, Bo and Smith, William and Rizzo, Denise M and Castanier, Matthew and Ghaffari, Maani and Barton, Kira},
  journal={IEEE Transactions on Robotics},
  volume={39},
  number={2},
  year={2022}
}

@article{fu2022learning,
  title={Learning Task Requirements and Agent Capabilities for Multi-agent Task Allocation},
  author={Fu, Bo and Smith, William and Rizzo, Denise and Castanier, Matthew and Ghaffari, Maani and Barton, Kira},
  journal={arXiv preprint arXiv:2211.03286},
  year={2022}
}

@article{rieskamp2003people,
  title={How do people learn to allocate resources? Comparing two learning theories.},
  author={Rieskamp, J{\"o}rg and Busemeyer, Jerome R and Laine, Tei},
  journal={Journal of Experimental Psychology: Learning, Memory, and Cognition},
  volume={29},
  number={6},
  pages={1066},
  year={2003},
  publisher={American Psychological Association}
}

@article{rieskamp2006ssl,
  title={{SSL}: a theory of how people learn to select strategies.},
  author={Rieskamp, J{\"o}rg and Otto, Philipp E},
  journal={Journal of experimental psychology: General},
  volume={135},
  number={2},
  pages={207},
  year={2006},
  publisher={American Psychological Association}
}

@article{quinton2023market,
  title={Market approaches to the multi-robot task allocation problem: a survey},
  author={Quinton, F{\'e}lix and Grand, Christophe and Lesire, Charles},
  journal={Journal of Intelligent \& Robotic Systems},
  year={2023}
}

@INPROCEEDINGS{10341837,

  author={Park, Jinwoo and Messing, Andrew and Ravichandar, Harish and Hutchinson, Seth},

  booktitle={IEEE/RSJ International Conference on Intelligent Robots and Systems (IROS)}, 

  title={Risk-Tolerant Task Allocation and Scheduling in Heterogeneous Multi-Robot Teams}, 

  year={2023},

  keywords={Schedules;Uncertainty;Robot kinematics;Urban areas;Redundancy;Robot sensing systems;Mixed integer linear programming},
}

@article{chakraa2023optimization,
  title={Optimization techniques for Multi-Robot Task Allocation problems: Review on the state-of-the-art},
  author={Chakraa, Hamza and Gu{\'e}rin, Fran{\c{c}}ois and Leclercq, Edouard and Lefebvre, Dimitri},
  journal={Robotics and Autonomous Systems},
  pages={104492},
  year={2023},
  publisher={Elsevier}
}

@article{CMTAB,
  title={Concurrent constrained optimization of unknown rewards for multi-robot task allocation},
  author={Singh, Sukriti and Srikanthan, Anusha and Mallampati, Vivek and Ravichandar, Harish},
  journal={arXiv preprint arXiv:2305.15288},
  year={2023}
}

@article{srinivas2009gaussian,
  title={Gaussian process optimization in the bandit setting: No regret and experimental design},
  author={Srinivas, Niranjan and Krause, Andreas and Kakade, Sham M and Seeger, Matthias},
  journal={arXiv preprint arXiv:0912.3995},
  year={2009}
}

@inproceedings{baptista2018bayesian,
  title={Bayesian optimization of combinatorial structures},
  author={Baptista, Ricardo and Poloczek, Matthias},
  booktitle={International conference on machine learning},
  pages={462--471},
  year={2018},
  organization={PMLR}
}

@article{choudhury2022dynamic,
  title={Dynamic multi-robot task allocation under uncertainty and temporal constraints},
  author={Choudhury, Shushman and Gupta, Jayesh K and Kochenderfer, Mykel J and Sadigh, Dorsa and Bohg, Jeannette},
  journal={Autonomous Robots},
  volume={46},
  number={1},
  pages={231--247},
  year={2022},
  publisher={Springer}
}

@inproceedings{le2006multi,
  title={Multi-agent task assignment in the bandit framework},
  author={Le Ny, Jerome and Dahleh, Munther and Feron, Eric},
  booktitle={Proceedings of the 45th IEEE Conference on Decision and Control},
  pages={5281--5286},
  year={2006},
  organization={IEEE}
}

@INPROCEEDINGS{kailas1,

  author={Kailas, Siva and Deolasee, Srujan and Luo, Wenhao and Kim, Woojun and Sycara, Katia},

  booktitle={2025 IEEE International Conference on Robotics and Automation (ICRA)}, 

  title={Integrating Multi-Robot Adaptive Sampling and Informative Path Planning for Spatiotemporal Natural Environment Prediction}, 

  year={2025},

  volume={},

  number={},

  pages={11413-11419},

  keywords={Adaptation models;Lower bound;Simulation;Predictive models;Path planning;Spatiotemporal phenomena;Peer-to-peer computing;Multi-robot systems;Optimization;Mutual information},

  doi={10.1109/ICRA55743.2025.11128099}}

@ARTICLE{8260881,

  author={Schlotfeldt, Brent and Thakur, Dinesh and Atanasov, Nikolay and Kumar, Vijay and Pappas, George J.},

  journal={IEEE Robotics and Automation Letters}, 

  title={Anytime Planning for Decentralized Multirobot Active Information Gathering}, 

  year={2018},

  volume={3},

  number={2},

  pages={1025-1032},

  keywords={Planning;Robot kinematics;Robot sensing systems;Trajectory;Estimation;Reactive and sensor-based planning;motion and path planning;multi-robot systems},

  doi={10.1109/LRA.2018.2794608}}

@ARTICLE{9632368,

  author={Igoe, Conor and Ghods, Ramina and Schneider, Jeff},

  journal={IEEE Robotics and Automation Letters}, 

  title={Multi-Agent Active Search: A Reinforcement Learning Approach}, 

  year={2022},

  volume={7},

  number={2},

  pages={754-761},

  keywords={Search problems;Sensors;Task analysis;Robot sensing systems;Training;Reinforcement learning;Optimization;Aerial systems: perception and autonomy;reinforcement learning;multi-robot systems},

  doi={10.1109/LRA.2021.3131697}}

@article{tzes2022graph,
  title={Graph neural networks for multi-robot active information acquisition},
  author={Tzes, Mariliza and Bousias, Nikolaos and Chatzipantazis, Evangelos and Pappas, George J},
  journal={arXiv preprint arXiv:2209.12091},
  year={2022}
}

@ARTICLE{10854527,

  author={Dai, Weiheng and Rai, Utkarsh and Chiun, Jimmy and Cao, Yuhong and Sartoretti, Guillaume},

  journal={IEEE Robotics and Automation Letters}, 

  title={Heterogeneous Multi-robot Task Allocation and Scheduling via Reinforcement Learning}, 

  year={2025},

  volume={10},

  number={3},

  pages={2654-2661},

  keywords={Robots;Robot kinematics;Robot sensing systems;System recovery;Resource management;Schedules;Training;Vectors;Reinforcement learning;Search problems;Planning, scheduling and coordination;multi-robot systems;reinforcement learning},

  doi={10.1109/LRA.2025.3534682}}

@article{wang2022heterogeneous,
  title={Heterogeneous graph attention networks for scalable multi-robot scheduling with temporospatial constraints},
  author={Wang, Zheyuan and Liu, Chen and Gombolay, Matthew},
  journal={Autonomous Robots},
  volume={46},
  number={1},
  pages={249--268},
  year={2022},
  publisher={Springer}
}

@INPROCEEDINGS{10610342,

  author={Jose, Williard Joshua and Zhang, Hao},

  booktitle={2024 IEEE International Conference on Robotics and Automation (ICRA)}, 

  title={Learning for Dynamic Subteaming and Voluntary Waiting in Heterogeneous Multi-Robot Collaborative Scheduling}, 

  year={2024},

  volume={},

  number={},

  pages={4569-4576},

  keywords={Learning systems;Schedules;Robot kinematics;Imitation learning;Collaboration;Dynamic scheduling;Transformers},

  doi={10.1109/ICRA57147.2024.10610342}}

@ARTICLE{11044426,

  author={Marino, Antonio and Restrepo, Esteban and Pacchierotti, Claudio and Giordano, Paolo Robuffo},

  journal={IEEE Robotics and Automation Letters}, 

  title={Decentralized Reinforcement Learning for Multi-Agent Multi-Resource Allocation via Dynamic Cluster Agreements}, 

  year={2025},

  volume={10},

  number={8},

  pages={8123-8130},

  keywords={Resource management;Training;Dynamic scheduling;Reinforcement learning;Scalability;Robot kinematics;Optimization;Decision making;Data mining;Artificial intelligence;Distributed Control;Graph Neural Network;Resource Assignment},

  doi={10.1109/LRA.2025.3581126}}

@ARTICLE{10107979,

  author={Cannelli, Loris and Zhu, Mengjia and Farina, Francesco and Bemporad, Alberto and Piga, Dario},

  journal={IEEE Control Systems Letters}, 

  title={Multi-Agent Active Learning for Distributed Black-Box Optimization}, 

  year={2023},

  volume={7},

  number={},

  pages={1488-1493},

  keywords={Optimization;Closed box;Space exploration;Linear programming;Network topology;Cost function;Topology;Black-box optimization;distributed optimization;model predictive control;multi-agent networks;surrogate models},

  doi={10.1109/LCSYS.2023.3270347}}

@inproceedings{stadler2023approximating,
  title={Approximating the value of collaborative team actions for efficient multiagent navigation in uncertain graphs},
  author={Stadler, Martina and Banfi, Jacopo and Roy, Nicholas},
  booktitle={Proceedings of the International Conference on Automated Planning and Scheduling},
  volume={33},
  pages={677--685},
  year={2023}
}

@article{GP-UCB,
  title={Gaussian process optimization in the bandit setting: No regret and experimental design},
  author={Srinivas, Niranjan and Krause, Andreas and Kakade, Sham M and Seeger, Matthias},
  journal={arXiv preprint arXiv:0912.3995},
  year={2009}
}

@article{slivkins2019introduction,
  title={Introduction to multi-armed bandits},
  author={Slivkins, Aleksandrs and others},
  journal={Foundations and Trends{\textregistered} in Machine Learning},
  volume={12},
  number={1-2},
  pages={1--286},
  year={2019},
  publisher={Now Publishers, Inc.}
}

@ARTICLE{9144385,

  author={Jang, Dohyun and Yoo, Jaehyun and Son, Clark Youngdong and Kim, Dabin and Kim, H. Jin},

  journal={IEEE Robotics and Automation Letters}, 

  title={Multi-Robot Active Sensing and Environmental Model Learning With Distributed Gaussian Process}, 

  year={2020},

  volume={5},

  number={4},

  pages={5905-5912},

  keywords={Robot sensing systems;Robot kinematics;Kernel;Gaussian processes;Data models;Protocols;Multi-robot systems;distributed robot systems;networked robots},

  doi={10.1109/LRA.2020.3010456}}

@article{diamond2016cvxpy,
  author  = {Steven Diamond and Stephen Boyd},
  title   = {{CVXPY}: {A} {P}ython-embedded modeling language for convex optimization},
  journal = {Journal of Machine Learning Research},
  year    = {2016},
  volume  = {17},
  number  = {83},
  pages   = {1--5},
}

@article{agrawal2018rewriting,
  author  = {Agrawal, Akshay and Verschueren, Robin and Diamond, Steven and Boyd, Stephen},
  title   = {A rewriting system for convex optimization problems},
  journal = {Journal of Control and Decision},
  year    = {2018},
  volume  = {5},
  number  = {1},
  pages   = {42--60},
}

\clearpage
\begin{appendices} 

\section{Sampling from a Convex Hull}
\label{appendixA}
Recall from Sec.~\ref{subsec:active_learning_approach} that for the convex-hull constrained strategy, we sample $\mathbf{S}$ from $\mathcal{S}=[0,1]^{M\times N}$, then obtain the traits allocated to all tasks as $\mathbf{Y} = \mathbf{S}\robotTraitMatrix[bold]$. Here, we prove that the $\mathbf{Y}$ obtained this way lies within the convex hull of the realizable set of points in $\mathcal{Y}_{\robotTraitMatrix[bold]}$. 
For $M$ tasks, $\mathcal{Y}_{\robotTraitMatrix[bold]}$ contains $K = M\cdot 2^N$ vertices in the trait space, if $N$ robots are to be assigned. Note that $K$ is not exponential with respect to $M$ since the tasks do not necessarily happen concurrently. 
We denote these $K$ points as $\mathbf{Y}^1,\mathbf{Y}^2,\dots,\mathbf{Y}^{K}$. Correspondingly, we define $\allocation^k$ as the allocation matrix satisfying $\mathbf{Y}^k = \mathbf{A}^k \robotTraitMatrix[bold]$. 
Note that in the allocation space, there exists a convex hull for $\{\allocation^k\}_{k=1}^K$, which we call $\mathcal{CH}_\allocation$. 
By the nature of the relaxation technique, for any sampled $\mathbf{S}$, we know it lies within $\mathcal{CH}_\allocation$, since we can always find a set of non-negative real numbers $\lambda_1,\lambda_2, \dots, \lambda_K$ satisfying $\sum_{k=1}^{K}\lambda_k = 1$ and $\mathbf{S}=\sum_{k=1}^{K}\lambda_k \allocation^k$. Then, we know: 
\begin{equation}
    \begin{aligned}
        \mathbf{S}\robotTraitMatrix[bold] &= (\sum_{k=1}^{K}\lambda_k \allocation^k)\robotTraitMatrix[bold] \\
        & =\sum_{k=1}^{K}\lambda_k \allocation^k\robotTraitMatrix[bold] \\
        & =\sum_{k=1}^{K}\lambda_k \mathbf{Y}^k.
    \end{aligned} \label{eq:ch_proof}
\end{equation}
The left side of Eq.~\ref{eq:ch_proof} reflects how we sampled a point in the convex-hull constrained strategy, while the right side trivially resides in the convex hull of $\mathcal{Y}_{\robotTraitMatrix[bold]}$ by definition. We thus proved that generating $\mathbf{Y}$ as $\mathbf{Y} = \mathbf{S}\robotTraitMatrix[bold]$ ensures that it is inside the convex hull of $\mathcal{Y}_{\robotTraitMatrix[bold]}$.

\end{appendices}

\end{document}